\pdfoutput=1
\documentclass{article}



    \usepackage[preprint]{neurips_2022}



\usepackage[utf8]{inputenc} 
\usepackage[T1]{fontenc}    
\usepackage{hyperref}       
\usepackage{url}            
\usepackage{booktabs}       
\usepackage{amsfonts}       
\usepackage{nicefrac}       
\usepackage{microtype}      
\usepackage{xcolor}         

\usepackage{amsmath}
\usepackage{amssymb}
\usepackage{mathtools}
\usepackage{amsthm}
\usepackage{bm}
\usepackage{wrapfig}

\usepackage[capitalize,noabbrev]{cleveref}

\theoremstyle{plain}
\newtheorem{theorem}{Theorem}[section]
\newtheorem{proposition}[theorem]{Proposition}

\theoremstyle{definition}

\theoremstyle{remark}










\def\eqref#1{equation~\ref{#1}}
\def\Eqref#1{Equation~\ref{#1}}








\def\1{\bm{1}}

\def\eps{{\epsilon}}



\def\ro{{\textnormal{o}}}







\def\vv{{\bm{v}}}
\def\vw{{\bm{w}}}



\def\mH{{\bm{H}}}

\def\mW{{\bm{W}}}

\def\mZ{{\bm{Z}}}

\DeclareMathAlphabet{\mathsfit}{\encodingdefault}{\sfdefault}{m}{sl}
\SetMathAlphabet{\mathsfit}{bold}{\encodingdefault}{\sfdefault}{bx}{n}











\newcommand{\E}{\mathbb{E}}

\newcommand{\R}{\mathbb{R}}



\DeclareMathOperator{\Tr}{Tr}

\usepackage[textsize=tiny]{todonotes}

\newcommand{\pa}[1]{\left(#1\right)}
\newcommand{\br}[1]{\left[#1\right]}

\renewcommand{\ro}{\partial}
\renewcommand{\eps}{\varepsilon}
\newcommand{\ba}[1]{\overline{#1}}
\newcommand{\del}{\nabla}
\renewcommand{\L}{\mathcal{L}}

\newcommand{\V}[1]{{\mathbf{#1}}} 

\newcommand*\ryu[1]{\textcolor{magenta}{[RY: #1]}}
\newcommand{\jl}[1]{{\color{blue}[JL: #1]}}
\newcommand{\nd}[1]{{\color{orange}[ND: #1]}}
\newcommand{\ry}[1]{{\color{magenta}[RY: #1]}}

\renewcommand{\ry}[1]{}
\renewcommand{\nd}[1]{}
\renewcommand{\jl}[1]{}
\renewcommand{\ryu}[1]{}

\newcommand{\out}[1]{}

\title{Faster  Optimization on Sparse Graphs \\ via Neural Reparametrization}

%

\author{
  Nima Dehmamy\thanks{Equal contribution} \\
  IBM Research\\
  \texttt{nima.dehmamy@ibm.com}
  \And
  Csaba Both$^*$\\
  Northeastern University\\
  \texttt{both.c@northeastern.edu}
  \And
  Jianzhi Long\\
  UCSD\\
  \texttt{jlong@ucsd.edu}
  \And
  Rose Yu\\
  UCSD\\
  \texttt{roseyu@ucsd.edu}
}

\begin{document}

\maketitle

\begin{abstract}
    In mathematical optimization, second-order Newton's methods generally converge faster than first-order methods, but they   require the inverse of the Hessian, hence  are computationally expensive.
    However, we discover that on sparse graphs, graph neural networks (GNN) can implement an efficient Quasi-Newton method that can speed up optimization by a factor of 10-100x.
    Our method, \textit{neural reparametrization},  modifies the optimization parameters as the output of a GNN to reshape the optimization landscape. Using a precomputed Hessian as the propagation rule, the GNN can effectively utilize the second-order information, reaching a similar effect as adaptive gradient methods.
    As our method solves optimization through  architecture design, it can be used in conjunction with any optimizers such as Adam and RMSProp.
    We show the application of our method on scientifically relevant problems including heat diffusion, synchronization and persistent homology.
    \out{
    We consider a class of optimization problems that involve dynamic processes on graphs.
    We discover that reparametrizing the optimization variables as the output of a neural network can lead to significant speedup.
    We examine the dynamics of gradient flow of such neural reparametrization.
    We find that to obtain the maximum speed up,  the neural network architecture needs to be a specially designed graph convolutional network (GCN).
    The aggregation function of the GCN is constructed from the gradients of the loss function and reduces to the Hessian in early stages of the optimization.
    We show the utility of our method on two optimization problems: network synchronization and persistent homology optimization, and find an impressive speedup, with our method being $4\sim 80 \times$  faster.
    }
\end{abstract}

%

\begin{figure}[h]
 \vskip  -0.2in
 \centering
 \includegraphics[width=1\linewidth]{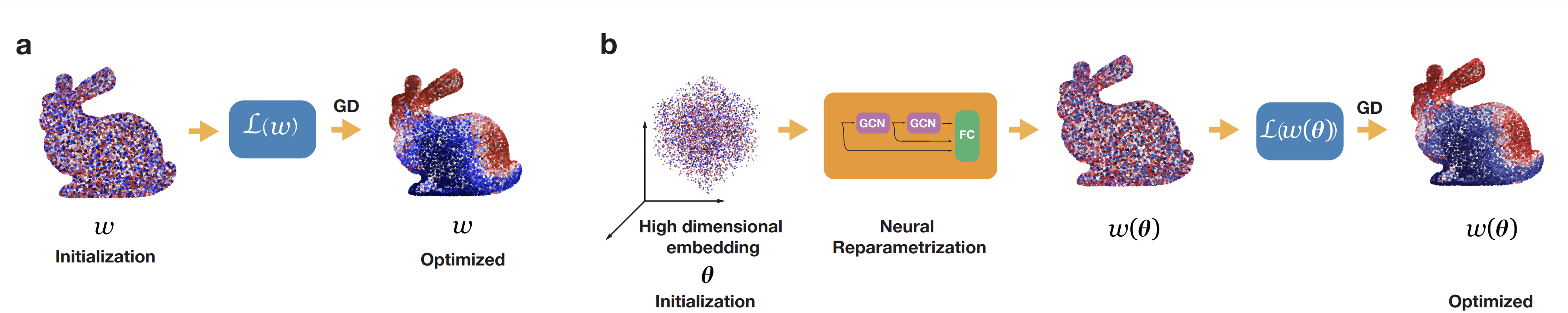}
 \caption{Original (a) vs Neural Reparametrization using GNN (b). 
 In problems on sparse graphs, using the Hessian inside the GNN implements a quasi-Newton method and accelerates optimization.
 }
 \label{fig:overview}
\end{figure}

Dynamical processes on graphs are ubiquitous in many real-world problems
such as traffic flow on road networks, epidemic spreading on mobility networks, and heat diffusion on a surface.
They also frequently appear in machine learning applications including  accelerating fluid simulations \cite{ummenhofer2019lagrangian, pfaff2020learning}, topological data analysis \cite{carriere2021optimizing, birdal2021intrinsic}, and solving differential equations \cite{zobeiry2021physics, he2020unsupervised, schnell2021half}.
Many such problems can be cast as (generally highly non-convex) optimization problems
on graphs.
Gradient-based methods are often used for numerical optimization.
But on large graphs, they also suffer from slow convergence \cite{chen2018fastgcn}.

In this paper,  we propose a novel optimization method based on  \textit{neural reparametrization}: parametrizing the solution to the optimization problem with a graph neural network (GNN).  Instead of optimizing the original parameters of the problem, we optimize the weights of the graph neural network.  Especially  on large sparse graphs, GNN reparametrization can be implemented efficiently, leading to significant speed-up.

The effect of this reparametrization is similar to adaptive gradient methods. As shown in \cite{duchi2011adaptive}, when using gradient descent (GD) for loss $\L$ with parameters $\vw$, optimal convergence rate is achieved when the learning rate is  proportional to $ G^{-1/2}_t := ( \sum_{\tau=1}^t {g}_\tau {g}^\top_\tau )^{-1/2}$ constructed from the gradient of the loss ${g}_t = \nabla \L (\vw_t)$ in the past $t$ steps.
%
However, when $G_t$ is large, computing $G_t^{-1/2}$ during optimization is intractable. Many adaptive gradient methods such as  AdaGrad \cite{duchi2011adaptive}, RMSProp \citep{rmsprop} and Adam \citep{kingma2014adam}  use a diagonal approximation of $G_t$.
More recent methods such as KFAC \citep{martens2015optimizing-kfac} and Shampoo \citep{gupta2018shampoo,anil2020scalable}
approximate a more compressed version of $G$ using Kronecker factorization for faster optimization.

For sparse graph optimization problems, we can construct a GNN to approximate $G_t^{-1/2}$ efficiently.
This is achieved by using a precomputed Hessian as the propagation rule in the GNN.
Because the GNN is trainable, its output also changes \textit{dynamically} during GD.  Our method effectively utilizes the second-order information, hence it is also implicitly quasi-Newton. But rather than approximating the inverse Hessian, we change the optimization variables entirely.
In summary, we show that
\begin{enumerate}
    \item Neural reparametrization, an optimization technique that  reparametrizes the optimization variables with  neural networks  has a similar effect to adaptive gradient methods.
    \item A particular linear reparametrization using GNN recovers the optimal adaptive learning rate of AdaGrad \citep{duchi2011adaptive}.
    \item On sparse graph optimization problems in early steps, a GNN reparametrization is computationally efficient, leading to faster convergence (400\%-8,000\% speedup). 
    \item We show the effectiveness of this method on three scientifically relevant problems on graphs: heat diffusion, synchronization, and persistent homology.
\end{enumerate}


\out{
\nd{
Narrative:
\begin{enumerate}
    \item Classic result Duchi not tractable because of $G^{-1/2}$
    \item In some cases we may be able to use this
    \item specifically, graph optimization
    \item $G^{-1/2} \sim $ Hessian, $~$ adjacency.
    \item By making this more flexible, using GNN, we let the system find what's best for that problem (cite the first-second order paper).
\end{enumerate}
}
}





\out{
\begin{figure}[t!]
 \centering
 \includegraphics[width=\linewidth]{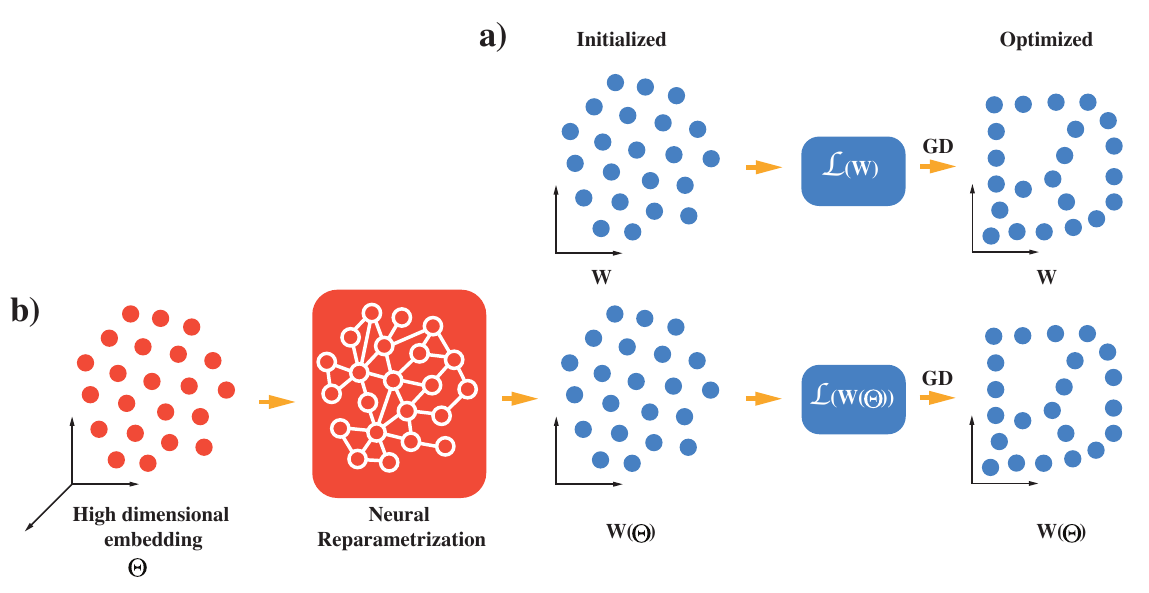}
 \caption{Original (a) vs Neural Reparametrization (b) of the optimization problem. }
 \label{fig:model-reparam}
\end{figure}
}

\section{Related Work}

\paragraph{Graph Neural Networks.} Our method demonstrates a novel and unique perspective for GNN. 
The majority of  literature  use GNNs to learn representations from graph  data to make predictions,  see surveys and the references in \citet{bronstein2017geometric, zhang2018deep, wu2019comprehensive,goyal2018graph}. 
Recently, \citep{bapst2020unveiling} showed the power of GNN in predicting long-time behavior of glassy systems, which are notoriously slow and difficult to simulate. 
Additionally, \citep{fu2022simulate} showed that GNN-based models can help speedup simulation of molecular dynamics problems by predicting large time steps ahead. 
However, we use GNNs to modify the learning  dynamics of  sparse graph optimization problems.   We discover that by reparametrizing optimization problems, we can have significantly speed-up. 
Indeed, we show analytically that a GNN with a certain aggregation rule achieves the same optimal adaptive learning rate as in \citep{duchi2011adaptive}. Thanks to the sparsity of the graph, we can obtain an efficient implementation of the optimizer that mimics the behavior of quasi-Newton methods.

\paragraph{Neural Reparametrization.}
Reparameterizing an optimization problem can reshape the landscape geometry, change the learning dynamics,  hence speeding-up convergence. In linear systems, preconditioning \citet{axelsson1996iterative, saad2000iterative} reparameterizes the problem by multiplying a fixed symmetric positive-definite preconditioner matrix to the original problem. \cite{groeneveld1994reparameterization} reparameterizes the covariance matrix to allow the use a faster Quasi-newton algorithm in maximum likelihood estimation. 
Recently, an \textit{implicit acceleration} has been documented in over-parametrized linear neural networks and analyzed in \citet{arora2018optimization, tarmoun2021understanding}. 
Specifically, \citet{arora2018optimization} shows that reparametrizing with deep linear networks impose a preconditioning scheme on gradient descent. 
Other works \citep{sosnovik2019neural, hoyer2019neural}
have demonstrated that reparametrizing with convolutional neural networks can speed-up structure optimization problems (e.g. designing a bridge).
But the theoretical foundation for the improvement is not well understood. To the best of our knowledge, designing GNNs to reparametrize and accelerate graph optimization has not been studied before.
%


\paragraph{PDE Solving.} Our work can also be viewed as finding the steady-state solution of non-linear discretized partial differential equations (PDEs). Traditional finite difference or finite element methods \cite{} for solving PDEs are computationally challenging. Several recent works use deep learning to solve PDEs in a \textit{supervised} fashion. For example, physics-informed neural network (PINN) \cite{raissi2019physics, greenfeld2019learning, bar2019unsupervised} parameterize the solution of a PDE with neural networks. Their neural networks take as input the physical domain. Thus, their solution is independent of the  mesh but is specific to each parameterization. Neural operators \cite{lu2021learning, li2020neural, li2020fourier} alleviate this limitation by learning in the space of  infinite dimensional functions. However, both class of methods require data from the numerical solver as supervision, whereas our method is completely \textit{unsupervised}. We solve the PDEs by directly minimizing the energy function.

\out{
\section{Optimal Preconditioning}
The key result for adaptive gradients, due to \cite{duchi2011adaptive}, is that optimal convergence rate is achieved when the learning rate is $\eps\sim \eps_0 G^{-1/2}$ where $G \sim \sum_t \nabla L \nabla \L^T $ is constructed from gradient of the loss $\L$ in past steps $t$. 
}

\section{Gradient Flow Dynamics and Adaptive Gradient}
Consider a graph with $n$ nodes (vertices) and each node $i$ has a  state vector $\vw_i \in \R^d$. We have an adjacency matrix $A \in \R^{n\times n}$ where $A_{ij}$ is the weight of the edge from node $i$ to node $j$. 
We look for a matrix of states 
$\vw \in \R^{n\times d}$ that minimizes a loss (energy) function $\L(\vw)$.

\out{The optimization problems we are interested in can be formulated as minimizing a smooth, lower-bounded loss function $\L(\vw)\in \R$ on a bounded variable $\vw \in \R^{n}$, meaning we want $w_*=\text{argmin}_{\vw} \L(\vw)$. 
Although we focus on graph optimization problems, it is useful to draw the connection with more general optimization problems first. 
%
When the optimization variables are matrices or tensors, $\vw$ represents a flattened vector containing all variables. 
$\vw$ can also represent all trainable parameters in a deep neural network.
For supervised learning with a dataset  $\mZ =\{(\V{x}_i, \V{y}_i)_{i=1}^N\}$, the loss function $\L(\vw)$ becomes $\L(\vw;\mZ)$. 
The dataset and the model class (e.g. architecture of a neural network) together define a loss landscape as a function of the optimization variables $\vw$.  
}
\paragraph{Gradient Flow.}
The optimization problem above can be tackled using 
gradient-based methods. Although in non-convex settings, these methods likely won't find a global minimum. 
Gradient descent (GD) updates the variables $\vw$ by taking repeated steps in the direction of the steepest descent $\vw_{t+1} = \vw_t - \eps \frac{\ro \L}{\ro\vw_t}$ for a learning rate $\eps$.
With infinitesimal time steps $\delta t$, GD becomes the continuous time gradient flow (GF) dynamics: 
\begin{align}
    {d\vw\over dt}& = -\eps {\ro \L \over \ro \vw },
    \label{eq:GF}
\end{align}
\paragraph{Adaptive Gradient.}
In general, $\eps$ can be a (positive semi-definite) matrix $\eps \in \R^{n\times n}$ that vary per parameter and change dynamically at each time step. 
In some parameter directions, GF is much slower than others \cite{mcmahan2010adaptive}.
Adaptive gradient methods use different learning rates for each parameter to make GF isotropic (i.e. all directions flowing at similar rates). The adaptive learning rate is: 
\begin{align}
    \eps & = \eta G^{-1/2} \in \mathbb{R}^{n\times n}, & 
    G \equiv \E\br{\del \L \del \L^T },
    \label{eq:eps-adagrad}
\end{align}
where $\eta \ll 1$ is a small constant.
The expectation $\E$ can be defined either  over a mini-batch of samples or over multiple time steps.  
In AdaGrad \citep{duchi2011adaptive}, $\E$ is  over some past time steps, while in RMSProp \citep{rmsprop} and Adam \citep{kingma2014adam} it is a discounted time averaging. 
Defining the gradient $g(t) = \del \L(\vw(t)) $, this expectation can be written as 
\begin{align}
    \E_{\Delta t}\br{f}(t)& \equiv  {1\over \Delta t}  \int_0^{\Delta t} ds 
    \gamma^s f(t-s), &  
    G(t) &\equiv \E_{\Delta t}\br{ gg^T }(t)  
    \label{eq:Gt-adagrad}
\end{align}
where $\gamma < 1$ is the discount factor.
Unfortunately, $G(t) \in \R^{n\times n}$ is generally a large matrix and computing $\eps = \eta G^{-1/2}$ ($O(n^3)$) during optimization is not feasible. 
Even using a fixed precomputed $G^{-1/2}$ is expensive, being $O(n^2)$ for the matrix times vector multiplication $G^{1/2} \del \L$. 
Therefore, methods like AdaGrad, Adam and RMSprop use a diagonal approximation $\mathrm{diag}(g^2_\tau)$ in \eqref{eq:Gt-adagrad}, while Shampoo and K-FAC use a more detailed Kronecker factorized approximation of $G_\tau$.


\section{Neural Reparametrization}
For  sparse graphs, a better approximation of $G^{-1/2}$ can be used in early steps to speed up  optimization.
The key idea behind our method is to  change the optimization parameters with a graph neural network (GNN) whose  propagation rule involves an approximate Hessian. 
We add a GNN module  such that GF on the new problem is equivalent to a quasi-Newton's method on the original problem. 
GNN allows us to perform quasi-Newton's method with time complexity similar to first-order GF, with $O(nk)$ complexity with $k \ll n$ proportional to average degree of nodes.

After a few  iterations, the approximate Hessian starts deviating significantly from the true Hessian and the reparametrization becomes less beneficial. 
Therefore, once improvements in the objective function become small,
we switch back to GF on the original problem. 
Using this two stage method we observe impressive speedups. Figure \ref{fig:overview} visualizes the pipeline of the proposed approach.

\out{
We first review how $G^{-1/2}$ and the inverse Hessian are identical in early optimization steps.
We then show that this inverse Hessian can be approximated with a reparametrization using GNN. 
}

\subsection{Proposed Approach}
We reparametrize the problem by expressing the optimization variable $\vw $ as a neural network function $\vw(\theta) $, where   $\theta$ are the trainable parameters. Rather than optimizing over $\vw$ directly, we optimize over the neural network parameters $\theta$. 
We seek a 
neural network architecture that is guaranteed to accelerate  optimization after reparametrization.
If we want to match the adaptive learning rate in \eqref{eq:eps-adagrad}, this would naturally lead to the design of   $\vw(\theta) $ as a GNN. 
We begin by comparing the GF and rate of loss  decay $d\L/dt$ for $\vw(\theta)$ with the original ones for $\vw$.

\paragraph{Modified Gradient Flow.}
After reparametrization $\vw(\theta)$, we are updating $\theta $ using GF on $\L(\vw(\theta))$
\begin{align}
    {d\theta_a \over dt} = -\sum_b \hat{\eps}_{ab} {\ro \L \over \ro \theta_b} = -\sum_b\hat{\eps}_{ab} \sum_i  {\ro \L \over \ro \vw_i } {\ro \vw_i \over \ro \theta_b} = -
    \br{\hat{\eps}J \del \L }_a
    \label{eq:d-theta-dt}
\end{align}
where $\hat{\eps}$ is the learning rate for parameters $\theta$, and $J \equiv \ro \vw / \ro \theta $ is the Jacobian of the reparametrization.  
Note that $\del \L = \ro \L /\ro \vw $. 
From \eqref{eq:d-theta-dt} we can also calculate the $d\vw (\theta)/dt$
\begin{align}
    {d\vw \over dt} & = 
    {\ro \vw \over \ro \theta }^T 
    {d\theta \over dt} = -
    J^T\hat{\eps}J \del \L 
    \label{eq:dwdt-theta-reparam}
\end{align}
which means that $d\vw/dt $ has now acquired an adaptive learning rate $J^T\hat{\eps}J$. Therefore,
the choice of architecture for $\vw(\theta)$ would determine $J$ and hence the convergence rate.

\paragraph{Architecture Choice.} 
We can show that an adaptive, linear reparametrization can closely mimic the optimal adaptive learning rate of \citep{duchi2011adaptive}.
This motivates the GNN architecture we use below for graph optimization problems. 
\begin{proposition}
For $\theta \in \R^m$ and $\vw \in \R^n$, with $m\geq n$, using a linear reparametrization $\vw = J \theta $ leads to the optimal adaptive learning rate in \eqref{eq:eps-adagrad},  where ($\gamma \in \R^{n \times m}$), 
\begin{align}
    J & = \sqrt{\eta} \gamma G_t^{-1/4}, & 
    \gamma^T\hat{\eps} \gamma &= I_{n\times n}
    \label{eq:JepsG-sol}
\end{align}
\end{proposition}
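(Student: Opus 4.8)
The plan is to read off the \emph{induced} learning rate on the original variables from the reparametrized gradient flow, and then solve the matrix equation that forces it to equal the optimal AdaGrad rate. From \eqref{eq:dwdt-theta-reparam}, a linear reparametrization $\vw = J\theta$ makes the effective dynamics $d\vw/dt = -J^T\hat{\eps}J\,\del\L$, so the learning rate that $\vw$ actually experiences is the $n\times n$ matrix $\eps_{\mathrm{eff}} = J^T\hat{\eps}J$. The whole claim then reduces to exhibiting a pair $(J,\hat{\eps})$ for which $\eps_{\mathrm{eff}} = \eta\,G^{-1/2}$, matching \eqref{eq:eps-adagrad}.

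First I would note that $G = \E[\del\L\,\del\L^T]$ is symmetric positive semi-definite by construction, so its principal fractional powers $G^{-1/2}$ and $G^{-1/4}$ are well-defined symmetric matrices (assuming $G \succ 0$; otherwise one restricts to the range of $G$ and uses a pseudoinverse). Then I substitute the proposed ansatz $J = \sqrt{\eta}\,\gamma G^{-1/4}$, where $\gamma\in\R^{m\times n}$ is shaped so the products conform. Using $(G^{-1/4})^T = G^{-1/4}$,
\begin{align}
    \eps_{\mathrm{eff}} = J^T\hat{\eps}J = \eta\, G^{-1/4}\,\gamma^T\hat{\eps}\gamma\, G^{-1/4},
    \label{eq:subst-check}
\end{align}
so imposing the second condition $\gamma^T\hat{\eps}\gamma = I_{n\times n}$ collapses the middle factor and leaves $\eps_{\mathrm{eff}} = \eta\, G^{-1/4}G^{-1/4} = \eta\, G^{-1/2}$, which is exactly the optimal rate. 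Once the fractional-power bookkeeping is in place, this verification is a one-line matrix computation.

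The remaining—and most delicate—point is feasibility: I must show that a valid pair $(\gamma,\hat{\eps})$ obeying $\gamma^T\hat{\eps}\gamma = I_{n\times n}$ exists, and that this is precisely where the hypothesis $m \geq n$ enters. The equation $\gamma^T\hat{\eps}\gamma = I_{n\times n}$ imposes $n(n+1)/2$ constraints, and for the $n\times n$ quadratic form built from the $m\times n$ matrix $\gamma$ to equal the identity, $\gamma$ must have full column rank $n$, which requires $m \geq n$. Concretely I would take $\hat{\eps} = I_{m\times m}$ and let $\gamma$ have orthonormal columns (e.g.\ the first $n$ columns of an $m\times m$ orthogonal matrix), so that $\gamma^T\hat{\eps}\gamma = \gamma^T\gamma = I_{n\times n}$; such a $\gamma$ exists exactly when $m\geq n$. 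This both certifies that the construction is non-vacuous and isolates the role of the over-parametrization assumption. The main obstacle is therefore not the algebra but the careful handling of (i) the well-definedness of $G^{-1/4}$ when $G$ is only semi-definite, and (ii) the under-determined nature of the matching equation, which leaves a whole family of admissible $(\gamma,\hat{\eps})$ rather than a unique solution.
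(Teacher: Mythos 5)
Your proof is correct and takes essentially the same route as the paper's: plug the ansatz $J=\sqrt{\eta}\,\gamma G_t^{-1/4}$ into the induced rate $J^T\hat{\eps}J$ from \eqref{eq:dwdt-theta-reparam} and use $\gamma^T\hat{\eps}\gamma = I_{n\times n}$ to collapse it to $\eta G_t^{-1/2}$, with $m\geq n$ guaranteeing a valid $\gamma$ exists. The only (minor) difference is in the feasibility step --- the paper takes $\hat{\eps}$ as an arbitrary given PSD learning rate and builds $\gamma$ from its SVD, while you exhibit the concrete pair $\hat{\eps}=I_{m\times m}$ with orthonormal-column $\gamma$ --- and your write-up additionally records details the paper leaves implicit, namely why full column rank makes $m\geq n$ necessary, how to handle a singular $G_t$ via the pseudoinverse, and the conforming shape of $\gamma$ (your $\R^{m\times n}$, rather than the $\R^{n\times m}$ stated in the proposition, is the one consistent with the Jacobian convention of \eqref{eq:d-theta-dt}).
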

\begin{proof}
As before, the learning rate $\hat{\eps}$ must be PSD. 
Thus, for $m \geq n$ and using SVD we can find $\gamma $ such that $\gamma^T \hat{\eps} \gamma = I_{n\times n}$. 
It follows that $J$ in \eqref{eq:JepsG-sol} satisfies $J^T\hat{\eps}J  = \eta G_t^{-1/2} $. 
\end{proof}
The solution \eqref{eq:JepsG-sol} is not unique. 
Even for $m=n $, $\gamma$ and $G_t^{-1/4}$ are not unique and any such solution yields a valid $J$ (e.g. any $vv^T=G_t^{-1/2}$ with large hidden dimension works). 
\out{
If we use simple SGD for the reparametrized model $\vw(\theta)$ with a constant learning rate $\hat{\eps} = \eta I $ we get $ J^T J  = G_t^{-1/2}$. 
\out{
\begin{align}
    J^T J & = G_t^{-1/2}.
    \label{eq:JTJ-G}
\end{align}
}
This can be satisfied in different ways.
One way is to have a linear architecture where $\vw = J \theta $. 
For $\theta \in \R^m$ and $\vw \in \R^n$, we need $m\geq n$ to satisfy \eqref{eq:JTepsJ-G} exactly. 
For $m=n$, we can obtain $J$ using the spectral expansion of $G_t$, which is also  
\out{
Let $\psi_i$ denote the eigenvector of $G_t$ with eigenvalue $\lambda_i$ (note $G$ is positive semi-definite (PSD), i.e. $\lambda_i\geq 0$) and $\Psi= (\psi_1,\cdots, \psi_n)$ be the matrix of all eigenvectors.  
\begin{align}
    G_t&= \sum_i \lambda_i \psi_i\psi^T = \Psi \Lambda \Psi^T, & 
    J&= \Lambda^{-1/4} \Psi^T 
\end{align}
}
expensive ($O(n^3)$).
}%
However, even when $\hat{\eps}=\eta I $ is constant, obtaining $J$ requires expensive spectral expansion of $G_t$ of $O(n^3)$. 
We could use a low-rank  approximation of  $J$ by minimizing the error $ \| J^TJ - G_t^{-1/2}\|^2$ with $ \theta \in \R^m$.
As $G_t$ changes during optimization, 
we also need to update $J$. 
For a small number of iterations after $t$, the change is small. So a fixed  $J$ could still yield a good approximation $J\approx G_t^{-1/4}$. 
However, for sparse graphs, we show that  efficient approximations to $\vw \approx G_t^{-1/4} \theta $ can be achieved via a  GNN. 

\paragraph{Normalization} 
Note that in GF with adaptive gradients $d\vw/dt = -\eta G_t^{-1/2} \del \L$, the choice of the learning rate $\eta $ depends on the eigenvalues of $G_t$. 
To ensure numerical stability we need $\eta \ll 1/\sqrt{\lambda_{max}}$, where $\lambda_{max}$ is the largest eigenvalue of $G_t$. 
If we normalize $G_t \to G_t / \lambda_{max}$, we don't need to adjust $\eta$ and any $\eta \ll 1$ works. 
Therefore, we want the Jacobian to satisfy 
\begin{align}
    J& \approx \pa{{G_t\over \lambda_{max}}}^{-1/4} 
    \label{eq:J-G-norm}
\end{align}


\subsection{Efficient Implementation for Graph Problems \label{sec:implement} }
Note that $G_t^{-1/2}$ in adaptive gradient methods approximates the inverse Hessian \citep{duchi2011adaptive}. 
If $\vw$ are initialized as $\vw_i \sim \mathcal{N}(0,1/\sqrt{n})$, we have (see SI \ref{ap:G-estimate})
\begin{align}
    G_{ij}(t\to 0)& =\sum_{k,l} \E[\vw_k\vw_l] {\ro^2 \L \over \ro \vw_k\ro \vw_i } {\ro^2 \L \over \ro \vw_l\ro \vw_j } + O(n^{-2}) 
    = {1\over n}\left.\br{\mathcal{H}^2}_{ij}\right|_{\vw \to 0} + O(n^{-2}), 
    \label{eq:M-Hessian2-short}
\end{align}
where $\mathcal{H}_{ij}(\vw)\equiv \ro^2 \L(\vw)/\ro \vw_i \ro \vw_j$ is the Hessian of the loss at $t=0$.  
Therefore in early stages, instead of computing $J\sim (G_t/\lambda_{max})^{-1/4}$, we can implement a quasi-Newton methods with
$J= \mH^{-1/2}$ where $\mH\equiv (1-\xi)\mathcal{H}/h_{max}$. 
Here $h_{max}\approx \sqrt{\lambda_{\max}}$ is the top eigenvalue of the Hessian and
$\xi\ll 1$. 
This requires pre-computing the Hessian matrix, but the denseness of $\mathcal{H}^{-1/2}$ will slow down the optimization process. Additionally, 
we also want to account for the changes in $\mathcal{H}$. 
Luckily, in graph optimization problems, sparsity can offer a way to tackle both issues using GNN.
We next discuss the structure of these problems. 
\looseness=-1
\out{
The eigenvectors of $G_t$ with zero eigenvalue are modes that do not evolve during optimization.
As gradients are zero in these directions, GD can never find and explore those directions. 
Therefore, we may exclude them from the beginning and claim that $G_t$ is full rank. 
}

\paragraph{Structure of Graph Optimization Problems.} 
For a graph with $n$ nodes (vertices), each of which has a state vector $\vw_i$, and an adjacency matrix $A \in \R^{n\times n}$, the graph optimization problems w.r.t. the state matrix $\vw \in \R^{n\times d}$ have the following common structure. 
\footnote{Although here $\vw$ is not flattened,
each row of $\vw$ still follows the same GF \eqref{eq:GF} and the results extend trivially to this case. }
\begin{align}
    \L(\vw) &\approx   \sum_{ij}A_{ij}\|\vw_i-\vw_j\|^2 + O(\vw^3) = \Tr\br{\vw^T L \vw} + O(\vw^3), 
    \label{eq:L-graph}
\end{align}
where $L=D-A$ is the graph Laplacian, $D_{ij} = \sum_{k} A_{ik} \delta_{ij}$ the diagonal degree matrix and $\delta_{ij}$ is the Kronecker delta. 
\eqref{eq:L-graph} is satisfied by all the problems we consider in our experiments, which include diffusion processes and highly nonlinear synchronization problems. 
With \eqref{eq:L-graph}, the Hessian at the initialization often becomes $\mathcal{H} \sim 2 L$. 
Hence, when the graph is sparse, $\mathcal{H}$ is sparse too.
Still, $\mathcal{H}^{-1}$ can be a dense matrix, making Newton's method expensive.  
However, at early stages,  we can exploit the sparsity  of Hessian to approximate $J = \mH^{-1/2}$ 
for efficient  optimization.

\paragraph{Exploiting Sparsity.} 
If $\mH$ 
is sparse or low-rank, we may estimate $\mH^{-1}$ using a short Taylor series (e.g. up to $O(\mH^2)$), which also remains sparse. 
When the graph is undirected and the degree distribution is concentrated (i.e. not fat-tailed) $h_{max} \approx \Tr[D]/n$  (average degree) (Appendix \ref{ap:implement}) 
\begin{align}
    \mH 
    \approx I - D^{-1/2}AD^{-1/2} = I - A_s. 
\end{align}
where $A_s = D^{-1/2}AD^{-1/2}$ is the symmetric degree normalized adjacency matrix. 
To get an $O(qn^2)$ approximation for this $J = \mH^{-1/2}$ wwe can use $q$ terms in the binomial expansion  $\mH^{-1/2}  \approx I- {1\over 2} A_s -{3\over 4} A_s^2+\dots $, which for small $q$ is also sparse.  
Next we show how such an expansion can be implemented using GNN.


\paragraph{GCN Implementation.}
A graph convolutional network (GCN) layer takes an input $\theta \in \R^{n\times d}$ 
and returns $\sigma(f(A) \theta V)$, where $f(A)$ is the propagation (or aggregation) rule, $ V \in \R^{d\times h}$ are the weights,  $\sigma$ is the nonlinearity.
For the GCN proposed by \citet{kipf2016semi}, we have $f(A) = A_s$. 
A linear GCN layer with residual connections represents the polynomial $F(\theta) = \sum_k A_s^k \theta V_k $.  \citep{dehmamy2019understanding}. 
We can implement an approximation of $\vw(\theta) = \mH^{-1/2}\theta $ using GCN layers. 
For example, we can implement the $O(A_s^2)$ approximation  $J\approx \mH^{-1/2}$  using a two-layer GCN with pre-specified weights. 
However, to account for the changes in the Hessian during optimization, we make  the  GCN weights trainable parameters.  
We also use a nonlinear GCN $\vw(\theta)=GNN(A_s, \theta)$, where $GNN(\cdot)$ is a trainable function implemented using one or two layers of GCN (Fig. \ref{fig:overview}).


\out{
\paragraph{Optimization Parameters}
We consider problems each node $i$ has a  state vector $\vw_i \in \R^d$. 
We are looking for a matrix of states 
$\vw \in \R^{n\times d}$ which minimizes a loss (energy) function $\L(\vw)$.
Although this $\vw$ has an additional dimension compared to the flattened $\vw$ discussed earlier, each row of $\vw$ still follows the same GF \eqref{eq:GF} and the results extend trivially to this case.
} 

\paragraph{Two-stage optimization.}
In $\vw(\theta) = GNN(A_s,\theta)$,  both the input $\theta$ and  the weights of the GCN layers in $GNN(\cdot)$ are trainable. 
In spite of this, because $GNN(\cdot)$ uses the initial Hessian via $A_s$, it may not help when the Hessian has changed significantly. 
This, combined with the extra computation costs, led us to adopt a two-stage optimization.
In the initial stage, we use our GNN reparametrization with a precomputed Hessian to perform a quasi-Newton GD on $\vw(\theta)$. 
Once the rate of loss decay becomes small, we switch to GD over the original parameters $\vw$, initialized using the final value of the parameters from the first stage. 
\out{
\paragraph{Implementation Using GNN}
Because $\ba{M}$ may depend on $\vw(\theta)$, as in \eqref{eq:M-K-indep-constraint}, we need a $K$ that is independent of $\theta$ to ensure  $\E[MK]=\E[M]\E[K]$. 
The simplest architecture  is a linear model $\vw_i = \sum_a \kappa_{ia} \theta_a $, which yields 
$\ba{K} = \kappa \kappa^T \approx  (\ba{M}/m_{max})^{-1/2}$.
We need $\ba{K}$ to be a full rank matrix, meaning that $\kappa$ needs to be at least $n\times n$. 
Since we need the least computationally expensive $\ba{K}$, we choose $\theta\in \R^n$, and let $\kappa$ be a symmetric $n\times n$ matrix. 
This way, $\kappa = \sqrt{\ba{K}} = (\ba{M}/m_{max})^{-1/4}$. 
For the initial stages, where $\ba{M} \approx \mathcal{H}^2/n$ we have $\kappa = \pa{\mathcal{H}/h_{max}}^{-1/2}$, with $\mathcal{H}$ being the Hessian. 
}
\out{
\paragraph{Relation to GCN}
Since both $\theta \in \R^n$ and $\vw\in \R^n$, our linear $\vw = \kappa\theta$ architecture is essentially a Graph Convolutional Network (GCN) \citep{kipf2016semi} with the aggregation rule $\kappa = (\ba{M}/m_{max})^{-1/4}$, or a weighted adjacency matrix.
In fact, all of our derivations remain unchanged if $\vw\in \R^{n\times d}$ and $\theta \in \R^{n\times m}$ (i.e. if there are $d$ and $m$ features per node in $\vw $ and $\theta$). 
Thus, our claim is that we can speed up the optimization process by reparametrizing $\vw$ using a GCN with linear activation, as $\vw=\kappa \theta $, 
where $\theta$ are trainable and $\kappa$ is the aggregation function derived from the loss gradients.
However, evaluating this $\kappa$ during GD can be quite expensive ($O(n^3)$) for large $n$ (GD steps $\sim O(n^2)$). 
Ideally we want an approximation for
$\kappa $ which is $O(qn^2)$ with $q\sim O(1)$.
}

\paragraph{Per-step Time Complexity.}
Let $\vw\in \R^{n\times d}$ with $d\ll n$ and let the average degree of each node be $k = \sum_{ij} A_{ij}/n $. 
For a sparse graph we have $k\ll n$. 
Assuming the leading term in the loss $\L(\vw)$ is as in \eqref{eq:L-graph},  the complexity of computing $\del \L(\vw)$ is at least $O(dn^2k)$, because of the matrix product $L\vw$.  
When we reparametrize to $\theta \in \R^{n\times h}$ with $h\ll n$, passign through each layer of GCN has complexity $O(hn^2k)$. 
The complexity of GD on the reparametrized model with $l$ GCN layers is $O((lh+d)n^2k)$. 
Thus, as long as $lh$ is not too big, the reparametrization slows down each iteration by a constant factor of $\sim 1+lh/d$. 

\out{
Then the complexity of computing $\L(\vw)$ is $O(dnk)$ where $k$ depends on the sparsity structure of $\L$. 
Hence we note that using \eqref{eq:K-M-ideal} to speedup the optimization may generally be useful if the the per step complexity of GD on $\L$ has a time complexity of least $O(qn^2)$.
(GD steps $\sim O(n^2)$). 
Ideally we want an approximation for
$\kappa $ which is $O(qn^2)$ with $q\sim O(1)$.
}



\section{Experiments}
We showcase the acceleration of neural reparametrization on three graph optimization problems: heat diffusion on a graph; synchronization of oscillators; and persistent homology, a mathematical tool that  computes the topology features of data.
We use the Adam \citet{kingma2014adam} optimizer and compare different reparametrization models.
We implemented all models with Pytorch.
Figure \ref{fig:speedup_summary} summarizes the speedups (wall clock time to run original problem divided by time of the GNN model)
we observed in all our experiments.
We explain the three problems used in the experiments next.
\looseness=-1

\out{
Consider a graph with $n$ nodes (vertices) and an adjacency matrix $A \in \R^{n\times n}$ where $A_{ij}$ is the weight of the edge from node $i$ to node $j$.
We consider problems each node $i$ has a  state vector $\vw_i \in \R^d$.
We are looking for a matrix of states
$\vw \in \R^{n\times d}$ which minimizes a loss (energy) function $\L(\vw)$.
Although this $\vw$ has an additional dimension compared to the flattened $\vw$ discussed above, each row of $\vw$ still satisfies the same equations as before and the results extend trivially to this case as we discuss here.
In an optimization problem on a graph the graph structure $A$ must play a prominent role in $\L(\vw)$.
To clarify this point, we briefly describe the loss functions for two of the problems used in our experiments.
}

\begin{figure}
    \vskip -0.2in
    \centering
    \includegraphics[width=.7\linewidth]{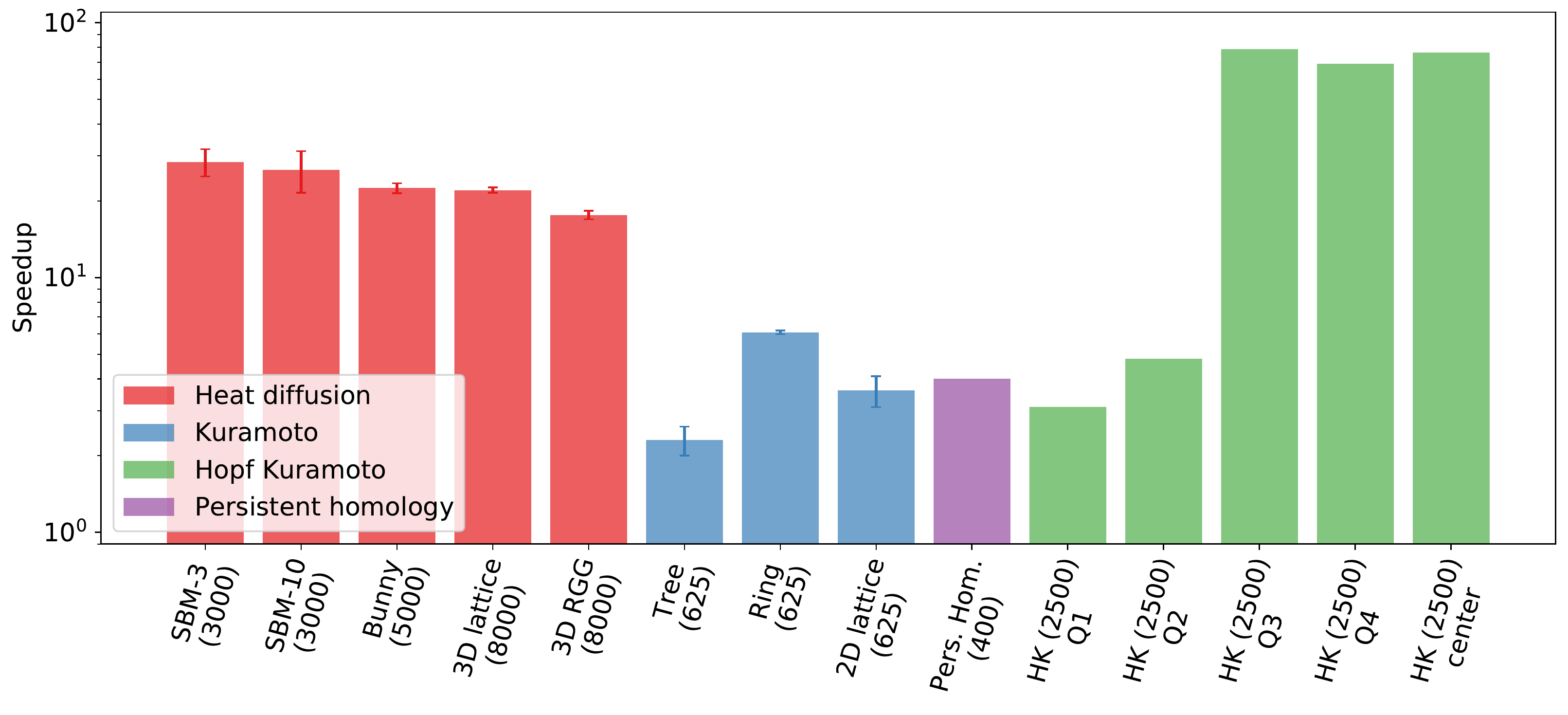}
    \vskip -0.18in
    \caption{Summary of the speedups in different experiments.
    The colors indicate the type of loss function and the labels explain the graph structure.
    Numbers in parentheses are the number of vertices.
    \looseness=-1
    }
    \vskip -0.2in
    \label{fig:speedup_summary}
\end{figure}

\begin{figure}[b]
 \vskip -0.2in
\centering
    \includegraphics[width=.9\linewidth]{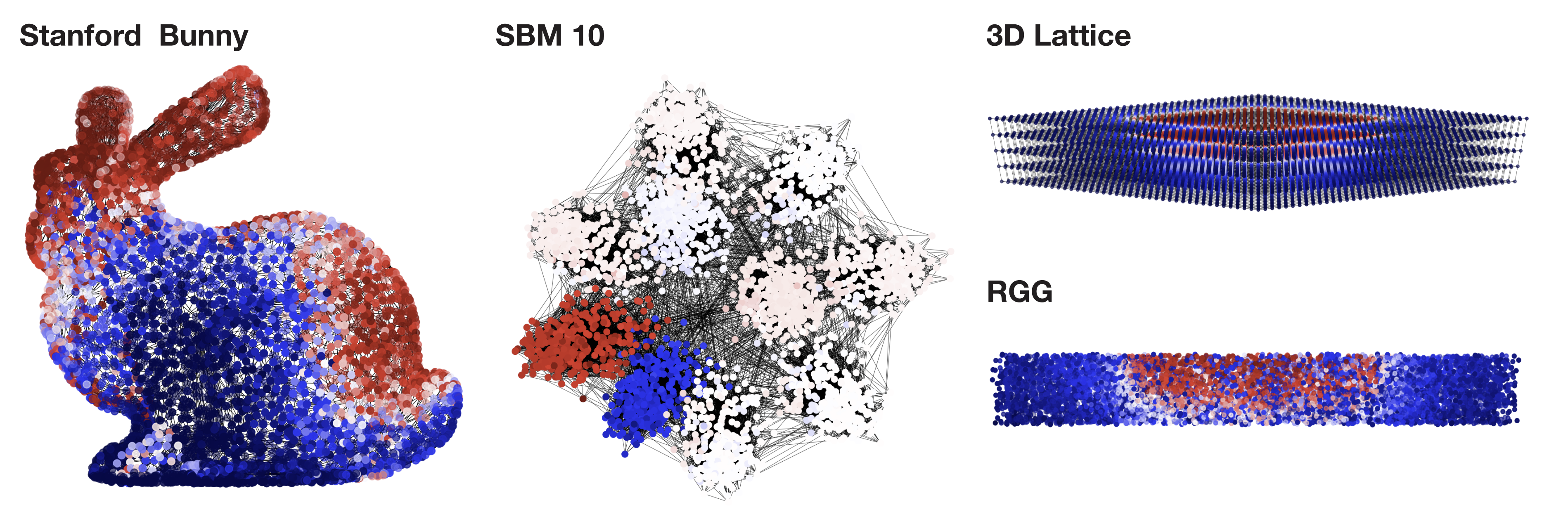}

 \caption{Optimized heat distribution on the Stanford bunny, Stochastic Block Model with 10 blocks (SBM 10), 3D lattice and Random Geometric Graphs (RGG). }
     \label{fig:heat_visualization}
\vskip -0.2in

\end{figure}

\subsection{Heat Diffusion}
Heat equation describes heat diffusion  \cite{incropera1996fundamentals}.
\nd{cite recent ML papers on heat eq}
It is given by $\ro_t \vw = - \eps \del^2 \vw $ where $ \vw(x,t) $ represents the temperature at point $x$ at time $t$, and $\eps$ is the heat diffusion constant.
On a graph, $\vw(x,t) \in \R^{+}$ is discretized and replaced by $\vw_i(t)$, with node $i$ representing the position.
The Laplacian operator $\del^2$  becomes the graph Laplacian $L = D-A$ where $D$ is the diagonal degree matrix with entries $D_{ij} = \sum_k A_{ik}\delta_{ij} $ (Kronecker delta).
The heat diffusion on graphs $d \vw/dt = -\eps L \vw $ 
can be derived as minimizing the following loss function
\begin{align}
    \L_{HE}(\vw)& = {1\over 2}\vw^T L \vw
    = {1\over 2 } \sum_{ij} A_{ij}(\vw_i-\vw_j)^2
\end{align}

While this loss function is quadratic and the heat equation is linear, the boundary conditions make it highly nonlinear.
For example, a set $S$ of the nodes may be attached to a heat or cold source with fixed temperatures $T_i$ for $i\in S$. In this case, we will
 add a regularizer $c\sum_{i\in S}\|\vw_i - T_i\|^4$ to the loss function.
For large meshes, lattices or amorphous, glassy systems, or systems with bottlenecks (e.g. graph with multiple clusters with bottlenecks between them),  finding the steady-state solution $\vw(t\to \infty)$ of heat diffusion can become prohibitively slow.

\textbf{Results for heat diffusion.}
Figure \ref{fig:speedup_summary} summarizes the observed speedups.
We find that on all these graphs, our method can speed up finding the final $\vw$ by over an order of magnitude.
Figure \ref{fig:heat_visualization} shows the final temperature distribution in some examples of our experiments.
We ran tests on different graphs, described next.
In all case we pick 10\% of nodes and connect them to a hot source using the regularizer $\|\vw_i-T_h\|^2$, and 10\% to the cold source using $\|\vw_i-T_c\|^2$.
The graphs in our experiments include the Stanford Bunny, Stochastic Block Model (SBM), 2D and 3D lattices, and Random Geometric Graphs (RGG) \cite{penrose2003random, karrer2011stochastic}.
SBM is model where the probability of $A_{ij} =1$ is drawn from a block diagonal matrix.
It represents a graphs with multiple clusters (diagonal block in $A_{ij}$) where nodes within a cluster are more likely to be connected to each other than to other clusters (Fig. \ref{fig:heat_visualization}, SBM 10).
RGG are graphs where the nodes are distributed in space (2D or higher) and nodes are more likely to connect to nearby nodes.

\subsection{Synchronization}
Small perturbations to many physical systems at equilibrium can be described a set of oscillators coupled over a graph
(e.g. nodes can be segments of a rope bridge and edges the ropes connecting neighboring segments.)
An important 
model for studying 
is the Kuramoto model \citep{kuramoto1975self,kuramoto1984chemical}, which has had a profound impact on engineering, physics, machine learning \cite{schnell2021half} and network synchronization problems \citep{pikovsky2003synchronization} in social systems.
The loss function for the Kuramoto model is defined as
\begin{align}
    \L(\vw) &= -\sum_{i,j} A_{ji} \cos\Delta_{ij}, &
    \Delta_{ij} &= \vw_i-\vw_j.
    \label{eq:Kuramoto}
\end{align}
which can be derived from the misalignement $ \|x_i-x_j\|^2 =2\br{1+\cos(\vw_i-\vw_j)} $ between unit 2D vectors $x_i$ representing each oscillator.
Its GF equation $d\vw_i /dt = -\eps \sum_{j} A_{ij} \sin\Delta_{ij}$, is highly nonlinear.
We further consider a more complex version of the Kuramoto model
important in physics:
the Hopf-Kuramoto (HK) model  \citet{lauter2015pattern}. 
\out{
The dynamics of the HK model follows
\begin{align}
    \frac{d\vw_i}{dt} &= c \sum_jA_{ji}\br{\cos\Delta_{ij} - s_1\sin\Delta_{ij}} \cr
    &+ s_2\sum_{k,j}  A_{ij} A_{jk} \br{\sin\pa{\Delta_{ji}+\Delta_{jk}} - \sin\pa{\Delta_{ji}-\Delta_{jk}}}  \cr
    &+A_{ij} A_{ik} \sin\pa{\Delta_{ji}+\Delta_{ki}}.
    \label{eq:Hopf-Kuramoto}
\end{align}
where $c$, $s_1,s_2$ are the governing parameters.
}
The loss function for the HK model is
\begin{align}
    \L &= 
    \sum_{i,j} A_{ji}\br{ \sin\Delta_{ij} + s_1\cos\Delta_{ij}}
    \!+\! {s_2\over 2}
    \sum_{i,k,j} A_{ij} A_{jk}  \big[\cos\pa{\Delta_{ji}\!+\!\Delta_{jk}}
    +\cos\pa{\Delta_{ji}\!-\!\Delta_{jk}}\big] \label{eq:Loss-Hopf-Kuramoto}
\end{align}
where $s_1,s_2$ are model constants determining the phases of the system.
This model has very rich set of phases (Fig. \ref{fig:hopf_kuramoto})
and
the phase space includes regions where simulations becomes slow and difficult.
This diversity of phases allows us to showcase our method's performance in different parameter regimes and in highly nonlinear scenarios.

\begin{figure}[b]
\vskip -0.2in
\begin{center}
\centerline{
    \includegraphics[width=\linewidth,
    ]{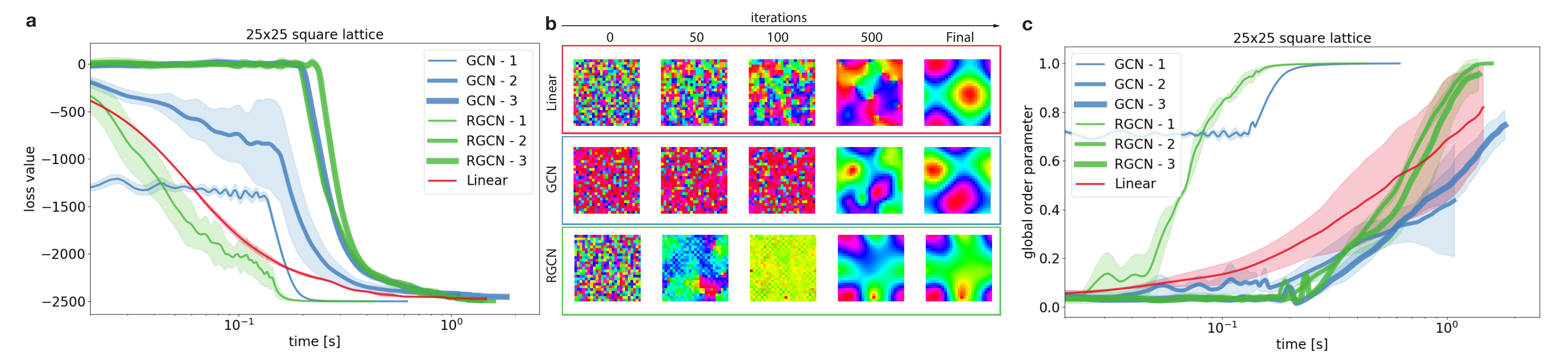}
    }
    \caption{Kuramoto model on a $25\times 25$ lattice (a) Loss over run time for different methods.
    (b) Evolution of $\vw$ over iterations.
    (c) Level of synchronization, measured by global order parameter $\rho$ over time.
    Neural reparametrization achieves the highest speedup.}
    \label{fig:Kuramoto_lattice}
\end{center}
\vskip -0.2in
\end{figure}

\begin{figure}
\centering
    \includegraphics[width=.65\linewidth,
    trim=0 .2in 0 .4in, 
    ]{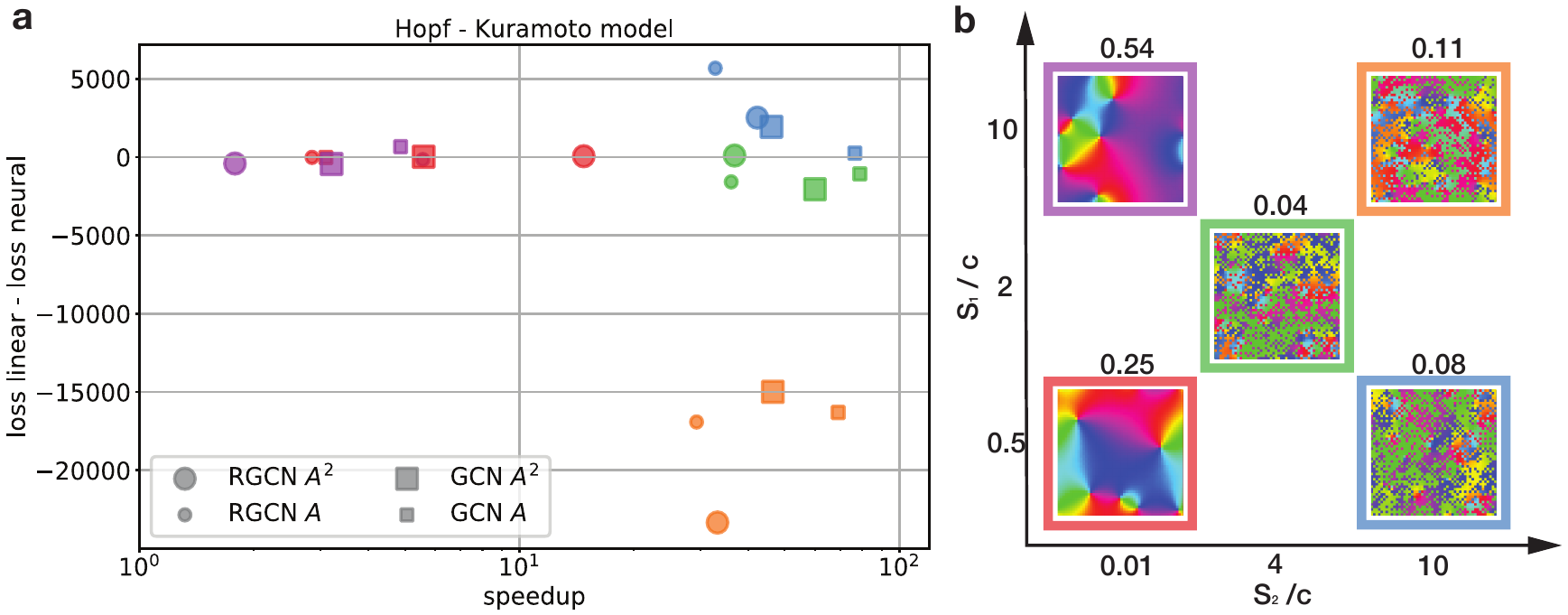}
    \caption{Hopf-Kuramoto model on a square lattice ($50\times 50$). a) Speedup in the final loss value difference function.
    Points color correspond to the regions of the phase diagram (b), also, the number above each phase pattern are the global order parameters.
    c) Coupled oscillator system.}
    \label{fig:hopf_kuramoto}
    \vskip -0.2in
\end{figure}


\textbf{Implementation.}
For early stages, we use a GCN with the aggregation function derived from the Hessian.
For the Kuramoto model, it is $\mathcal{H}_{ij}(0) = \ro^2\L/\ro\vw_i\ro \vw_j|_{\vw\to 0} = A_{ij}- \sum_k A_{ik}\delta_{ij} = -L_{ij} $ ($L= D-A$ being the Laplacian).
%
We use neural reparametrization in the first $100$ iterations and then switch to the original optimization (referred to as Linear) afterwards.
We experimented with three different graph structures: square lattice, circle graph, and tree graph.
The phases $\vw$ are randomly initialized between $0$ and $2\pi$ from a uniform distribution.
We let the models run until the loss  converges ($10$ patience steps for early stopping, $10^{-15}$ loss fluctuation limit).

\textbf{Results for the Kuramoto Model.}
Figure \ref{fig:Kuramoto_lattice} shows the results of Kuramoto model on a square lattice.
Additional results on  circle graph, and tree graph can be found in Appendix \ref{app:exp}.
Figure \ref{fig:Kuramoto_lattice} (a) shows that our method with one-layer GCN (GCN-1) and GCN with residual connection (RGCN-1) achieves significant speedup.
In particular, we found $3.6 \pm .5$ speed improvement for the lattice, $6.1 \pm .1$ for the circle graph and $2.7 \pm .3$ for tree graphs.
We also experimented with two layer (GCN/RGCN-2) and three layer (GCN/RGCN-3) GCNs.
As expected, the overhead of deeper GCN models slows down optimization and offsets the speedup gains.
Figure \ref{fig:Kuramoto_lattice} (b) visualizes the evolution of 
$\vw_i$ on a square lattice over iterations.
Although different GNNs reach the same loss value, the final solutions are quite different.
The linear model (without GNN) arrives at the final solution smoothly, while GNN models form dense clusters at the initial steps and reach an organized state before $100$ steps.
To quantify the level of synchronization, we measure a quantity $\rho$ known as the ``global order parameter'' (\cite{sarkar2021phase}):
$
    \rho = \frac{1}{N} \left|\sum_{j} e^{i\vw_j} \right|
$.
Figure \ref{fig:Kuramoto_lattice} (c) shows the convergence of the global order parameter over time. We can see that one-layer GCN and RGCN gives the highest amount of acceleration, driving the system to synchronization.
\out{
\begin{figure}
    \centering
    \includegraphics[width=\linewidth,
    ]{figures/kuramoto/K-lattice-3.pdf}
    \caption{Optimizing Kuramoto model on a $25\times 25$ square lattice (a) Loss over run time in seconds for different methods.  (b) Evolution of the phase variables  over iterations. (c) Level of synchronization, measured by global order parameter over time. Neural reparameterization with GCN achieves the highest speedup.}
    \label{fig:Kuramoto_lattice}
\end{figure}
}






\textbf{Results for the Hopf-Kuramoto Model.} We report the comparison  on synchronizing more complex Hopf-Kuramoto dynamics.
According to the \citet{lauter2015pattern} paper, we identify two different main patterns on the phase diagram Fig. \ref{fig:hopf_kuramoto}
(b): ordered (small $s_2/c$, smooth patterns) and disordered (large $s_2/c$, noisy) phases ($c=1$).
In all experiments, we use the same lattice size $50 \times 50$, with the same stopping criteria ($10$ patience steps and $10^{-10}$ loss error limit) and switch between the Linear and GNN reparametrization after $100$ iteration steps.
Fig. \ref{fig:hopf_kuramoto} (a) shows the loss at convergence versus the speedup.
We compare different GCN models and observe that GCN with $A^2$ as the propagation rule achieves the highest speedup.
This is not surprising, as from \eqref{eq:Loss-Hopf-Kuramoto} the Hessian for HK contain $O(A^2)$ terms.
Also, we can see that we have different speedups in each region, especially in the disordered phases.
Furthermore, we observed that the Linear and GCN models converge into a different minima in a few cases.
However, the patterns of $\vw$ remain the same. \out{
Figure \ref{fig:hopf_kuramoto} (b) shows the level of ordering changes region by region.
If the global order parameter is closer to $0$, we have more of a disordered phase while the parameter is closer to $1$, meaning it is a more organized pattern.
}
Interestingly, in the disordered phase we observe the highest speedup
(Fig. \ref{fig:speedup_summary})

\subsection{Persistent homology}
Persistent homology \citet{edelsbrunner2008persistent} is an algebraic tool for measuring topological features of shapes and functions. Recently, it has found many applications in machine learning \cite{hofer2019connectivity, gabrielsson2020topology, birdal2021intrinsic}. Persistent homology is computable with linear algebra and robust to perturbation of input data \citep{otter2017roadmap}, see more details in Appendix \ref{app:homology}.
An example application of persistent homology is point cloud optimization \citep{gabrielsson2020topology, carriere2021optimizing}. As shown in Fig. \ref{fig:pers-hom-loss_vs_speedup} left, given a random point cloud $\vw$ that lacks any observable characteristics, we aim to produce persistent homological features by optimizing the position of data.
\begin{eqnarray}
\L(\vw) = -\sum_{p\in D}\| p -\pi_{\Delta} (p)\|^2_\infty + \sum_{i=1}^n \|\vw_i-r\|^2
\label{eqn:opt_persisten}
\end{eqnarray}
where $p\in D = \{(b_i,d_i)\}_{i\in I_k}$ denotes the homological features in the the persistence diagram $D$, consisting of all the pairs of birth $b_i$ and death $d_i$ filtration values of the set of k-dimensional homological features $I_k$. $\pi_{\Delta}$ is the projection onto the diagonal $\Delta$ and $\sum_i d(\vw_i,S)$ constrains the points within the a square centered at the origin with length $2r$ (denote $r$ as range of the point cloud).
\cite{carriere2021optimizing} optimizes the point cloud positions directly with gradient-based optimization (we refer to it as ``linear'').

\begin{figure}
\vskip -0.2in
\centering
    a \hfill b \hfill c \hfill ~\\
    \includegraphics[width=\linewidth,
    trim=0 0 0 .28in, clip
    ]{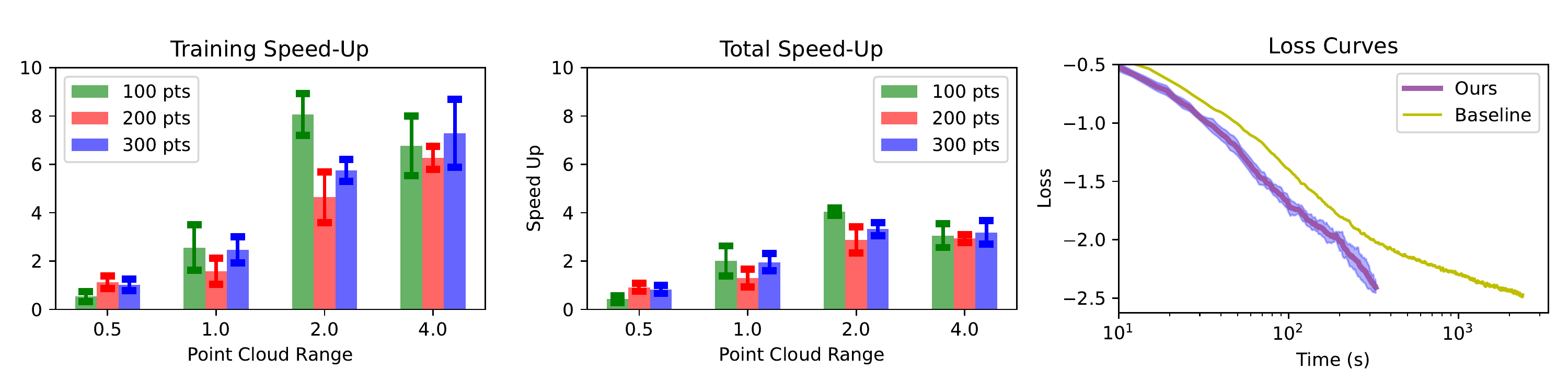}\\
    \includegraphics[width=.49\linewidth ]{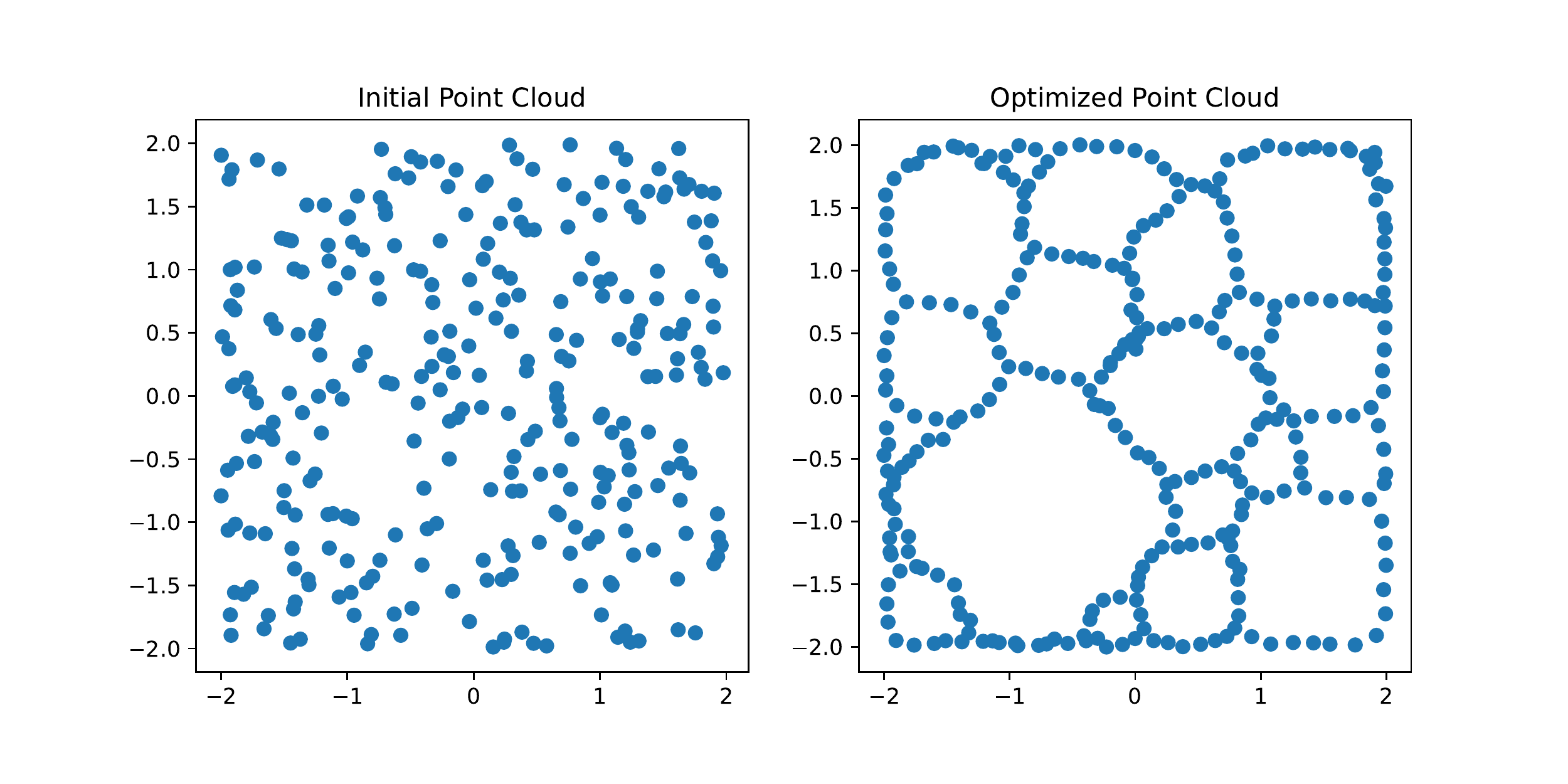}\includegraphics[width=.5\linewidth,trim=0 .35in 0 0, 
    ]{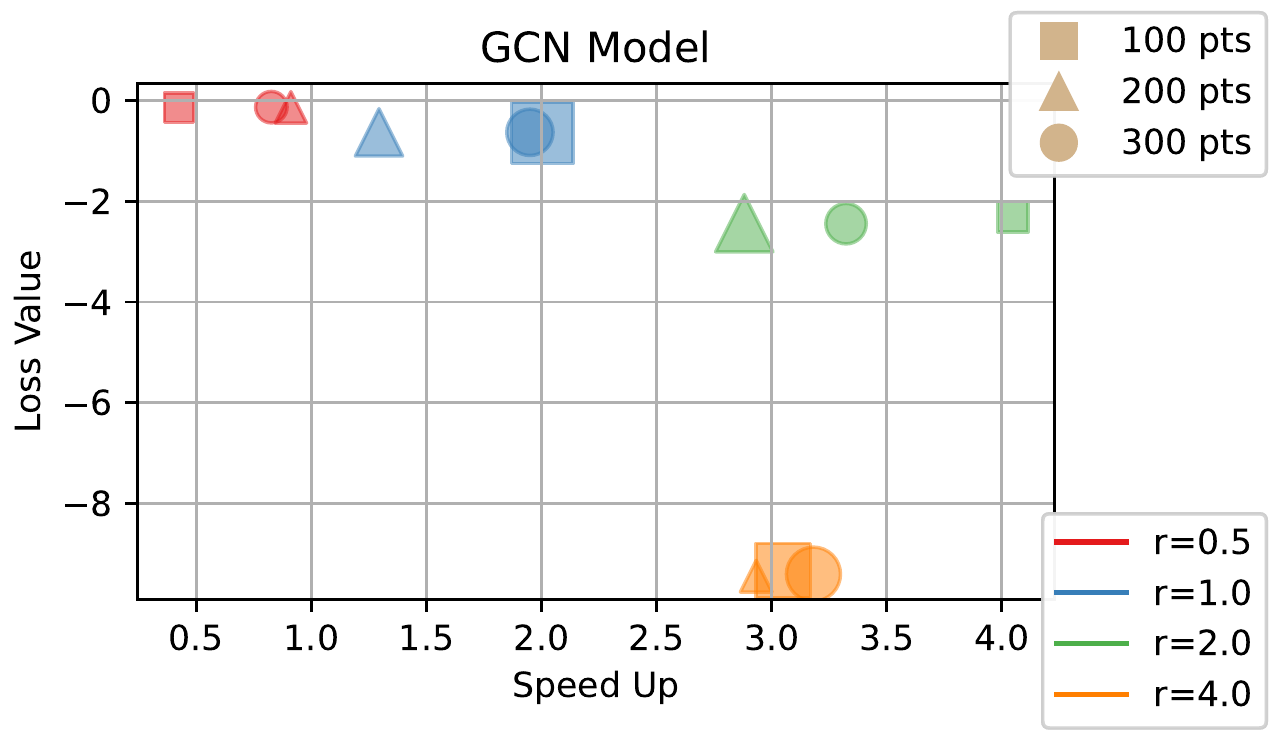}
    \vskip -1.5in
    d \hfill e \hfill f \hfill ~\hfill ~ \\
    \vskip 1.5in
    \caption{{Speedup:} a,b) Training and total time speedup; c) GCN speeds up convergence.
    d) Initial random point cloud and e) optimized point cloud.
    f) Loss vs speed-up of GCN model. The final loss depends only on the point cloud range, while the speedup is affected by both range and size.
    }
    \label{fig:pers-hom-GCN_1L_speedup}
    \label{fig:pers-hom-loss_vs_speedup}
\vskip -0.2in
\end{figure}

\out{
\begin{figure}[b]
    \centering
    \includegraphics[width=.49\linewidth ]{figures/Point-Cloud/Point_Cloud_Visualization.pdf}\includegraphics[width=.5\linewidth]{figures/Point-Cloud/GCN_1L_Loss_vs_Speedup.pdf}
 \caption{Left: Initial random point cloud and optimized point cloud. Right: Loss vs speed-up of GCN model. The final loss depends only on the  point cloud range, while the speedup is affected by both range and size.}
  \label{fig:pers-hom-loss_vs_speedup}
\vskip -0.2in
\end{figure}
}

\textbf{Implementation.}
We used the same Gudhi library for computing persistence diagram as \cite{gabrielsson2020topology, carriere2021optimizing}.
The run time of learning persistent homology is dominated by computing persistence diagram in every iteration, which has the time complexity of $O(n^3)$. 
Thus, the run time per iteration for GCN model and linear model are very similar, and we find that the GCN model can reduce convergence time by a factor of $\sim4$ (Fig. \ref{fig:pers-hom-GCN_1L_speedup},\ref{fig:pers-hom-loss_vs_speedup}).
We ran the experiments for point cloud of $100$,$200$,$300$ points, with ranges of $0.5$,$1.0$,$2.0$,$4.0$.
The hyperparameters of the GCN model are kept constant, including network dimensions.
The result for each setting is averaged from 5 consecutive runs.

\textbf{Results for Persistent Homology.}
Figure \ref{fig:pers-hom-loss_vs_speedup} right shows that the speedup of the GCN model is related to point cloud density.
In this problem, the initial position of the point cloud determines the topology features.
Therefore, we need to make sure the GCN models also yield the same positions as used in the linear model. Therefore, we first run a ``training" step, where we use MSE to match the $\vw(\theta)$ or GCN to the initial $\vw$ used in the linear model.
Training converges faster as the point cloud becomes more sparse, but the speedup gain saturates as point cloud density decreases.
On the other hand, time required for initial point cloud fitting increases significantly with the range of point cloud.
Consequently, the overall speedup peaks when the range of point cloud is around 4 times larger than what is used be in \cite{gabrielsson2020topology, carriere2021optimizing}, which spans over an area 16 times larger.
Further increase in point cloud range causes the speedup to drop as the extra time of initial point cloud fitting outweighs the reduced training time.
The loss curve plot in Fig. \ref{fig:pers-hom-GCN_1L_speedup} shows the convergence of training loss of the GCN model and the baseline model in one of the settings when GCN is performing well.
Fig. \ref{fig:pers-hom-loss_vs_speedup} shows the initial random point cloud and the output from the GCN model.
In the Appendix  \ref{app:homology}, we included the results of GCN model hyperparameter search and a runtime comparison of the GCN model under all experiment settings.


\section{Conclusion}
We propose a novel neural reparametrization scheme to    accelerate a large class of graph optimization problems. By reparametrizing the optimization problem with a graph convolutional network, we can modify the geometry of the loss landscape and obtain the maximum speed up. The effect of neural reparametrization mimics the behavior of adaptive gradient methods. A linear reparametrization of GCN recovers the optimal learning rate from AdaGrad.
The aggregation function of the GCN is constructed from the gradients of the loss function and reduces to the Hessian in early stages of the optimization.
We demonstrate our method on optimizing heat diffusion, network synchronization problems and persistent homology of point clouds. Depending on the experiment, we obtain a best case speedup that ranges from 5 - 100x.

One limitation of the work is that the switching from neural reparameterization to the original optimization stage is still ad-hoc. Further insights into the learning dynamics of the optimization are needed. Another interesting direction is to extend our method to stochastic dynamics, which has close connections with energy-based generative models.





\bibliographystyle{icml2022}
\bibliography{references}

\begin{thebibliography}{49}
\providecommand{\natexlab}[1]{#1}
\providecommand{\url}[1]{\texttt{#1}}
\expandafter\ifx\csname urlstyle\endcsname\relax
  \providecommand{\doi}[1]{doi: #1}\else
  \providecommand{\doi}{doi: \begingroup \urlstyle{rm}\Url}\fi

\bibitem[Anil et~al.(2020)Anil, Gupta, Koren, Regan, and
  Singer]{anil2020scalable}
Anil, R., Gupta, V., Koren, T., Regan, K., and Singer, Y.
\newblock Scalable second order optimization for deep learning.
\newblock \emph{arXiv preprint arXiv:2002.09018}, 2020.

\bibitem[Arora et~al.(2018)Arora, Cohen, and Hazan]{arora2018optimization}
Arora, S., Cohen, N., and Hazan, E.
\newblock On the optimization of deep networks: Implicit acceleration by
  overparameterization.
\newblock In \emph{International Conference on Machine Learning}, pp.\
  244--253. PMLR, 2018.

\bibitem[Axelsson(1996)]{axelsson1996iterative}
Axelsson, O.
\newblock \emph{Iterative solution methods}.
\newblock Cambridge university press, 1996.

\bibitem[Bapst et~al.(2020)Bapst, Keck, Grabska-Barwi{\'n}ska, Donner, Cubuk,
  Schoenholz, Obika, Nelson, Back, Hassabis, et~al.]{bapst2020unveiling}
Bapst, V., Keck, T., Grabska-Barwi{\'n}ska, A., Donner, C., Cubuk, E.~D.,
  Schoenholz, S.~S., Obika, A., Nelson, A.~W., Back, T., Hassabis, D., et~al.
\newblock Unveiling the predictive power of static structure in glassy systems.
\newblock \emph{Nature Physics}, 16\penalty0 (4):\penalty0 448--454, 2020.

\bibitem[Bar \& Sochen(2019)Bar and Sochen]{bar2019unsupervised}
Bar, L. and Sochen, N.
\newblock Unsupervised deep learning algorithm for pde-based forward and
  inverse problems.
\newblock \emph{arXiv preprint arXiv:1904.05417}, 2019.

\bibitem[Birdal et~al.(2021)Birdal, Lou, Guibas, and
  Simsekli]{birdal2021intrinsic}
Birdal, T., Lou, A., Guibas, L.~J., and Simsekli, U.
\newblock Intrinsic dimension, persistent homology and generalization in neural
  networks.
\newblock \emph{Advances in Neural Information Processing Systems}, 34, 2021.

\bibitem[Bronstein et~al.(2017)Bronstein, Bruna, LeCun, Szlam, and
  Vandergheynst]{bronstein2017geometric}
Bronstein, M.~M., Bruna, J., LeCun, Y., Szlam, A., and Vandergheynst, P.
\newblock Geometric deep learning: going beyond euclidean data.
\newblock \emph{IEEE Signal Processing Magazine}, 34\penalty0 (4):\penalty0
  18--42, 2017.

\bibitem[Carriere et~al.(2021)Carriere, Chazal, Glisse, Ike, Kannan, and
  Umeda]{carriere2021optimizing}
Carriere, M., Chazal, F., Glisse, M., Ike, Y., Kannan, H., and Umeda, Y.
\newblock Optimizing persistent homology based functions.
\newblock In \emph{International Conference on Machine Learning}, pp.\
  1294--1303. PMLR, 2021.

\bibitem[Chen et~al.(2018)Chen, Ma, and Xiao]{chen2018fastgcn}
Chen, J., Ma, T., and Xiao, C.
\newblock Fastgcn: Fast learning with graph convolutional networks via
  importance sampling.
\newblock In \emph{International Conference on Learning Representations}, 2018.

\bibitem[Dehmamy et~al.(2019)Dehmamy, Barab{\'a}si, and
  Yu]{dehmamy2019understanding}
Dehmamy, N., Barab{\'a}si, A.-L., and Yu, R.
\newblock Understanding the representation power of graph neural networks in
  learning graph topology.
\newblock \emph{Advances in Neural Information Processing Systems}, 2019.

\bibitem[Duchi et~al.(2011)Duchi, Hazan, and Singer]{duchi2011adaptive}
Duchi, J., Hazan, E., and Singer, Y.
\newblock Adaptive subgradient methods for online learning and stochastic
  optimization.
\newblock \emph{Journal of machine learning research}, 12\penalty0 (7), 2011.

\bibitem[Edelsbrunner et~al.(2008)Edelsbrunner, Harer,
  et~al.]{edelsbrunner2008persistent}
Edelsbrunner, H., Harer, J., et~al.
\newblock Persistent homology-a survey.
\newblock \emph{Contemporary mathematics}, 453:\penalty0 257--282, 2008.

\bibitem[Fu et~al.(2022)Fu, Xie, Rebello, Olsen, and Jaakkola]{fu2022simulate}
Fu, X., Xie, T., Rebello, N.~J., Olsen, B.~D., and Jaakkola, T.
\newblock Simulate time-integrated coarse-grained molecular dynamics with
  geometric machine learning.
\newblock \emph{arXiv preprint arXiv:2204.10348}, 2022.

\bibitem[Gabrielsson et~al.(2020)Gabrielsson, Nelson, Dwaraknath, and
  Skraba]{gabrielsson2020topology}
Gabrielsson, R.~B., Nelson, B.~J., Dwaraknath, A., and Skraba, P.
\newblock A topology layer for machine learning.
\newblock In \emph{International Conference on Artificial Intelligence and
  Statistics}, pp.\  1553--1563. PMLR, 2020.

\bibitem[Goyal \& Ferrara(2018)Goyal and Ferrara]{goyal2018graph}
Goyal, P. and Ferrara, E.
\newblock Graph embedding techniques, applications, and performance: A survey.
\newblock \emph{Knowledge-Based Systems}, 151:\penalty0 78--94, 2018.

\bibitem[Greenfeld et~al.(2019)Greenfeld, Galun, Basri, Yavneh, and
  Kimmel]{greenfeld2019learning}
Greenfeld, D., Galun, M., Basri, R., Yavneh, I., and Kimmel, R.
\newblock Learning to optimize multigrid pde solvers.
\newblock In \emph{International Conference on Machine Learning}, pp.\
  2415--2423. PMLR, 2019.

\bibitem[Groeneveld(1994)]{groeneveld1994reparameterization}
Groeneveld, E.
\newblock A reparameterization to improve numerical optimization in
  multivariate reml (co) variance component estimation.
\newblock \emph{Genetics Selection Evolution}, 26\penalty0 (6):\penalty0
  537--545, 1994.

\bibitem[Gupta et~al.(2018)Gupta, Koren, and Singer]{gupta2018shampoo}
Gupta, V., Koren, T., and Singer, Y.
\newblock Shampoo: Preconditioned stochastic tensor optimization.
\newblock In \emph{International Conference on Machine Learning}, pp.\
  1842--1850. PMLR, 2018.

\bibitem[He \& Pathak(2020)He and Pathak]{he2020unsupervised}
He, H. and Pathak, J.
\newblock An unsupervised learning approach to solving heat equations on chip
  based on auto encoder and image gradient.
\newblock \emph{arXiv preprint arXiv:2007.09684}, 2020.

\bibitem[Hofer et~al.(2019)Hofer, Kwitt, Niethammer, and
  Dixit]{hofer2019connectivity}
Hofer, C., Kwitt, R., Niethammer, M., and Dixit, M.
\newblock Connectivity-optimized representation learning via persistent
  homology.
\newblock In \emph{International Conference on Machine Learning}, pp.\
  2751--2760. PMLR, 2019.

\bibitem[Hoyer et~al.(2019)Hoyer, Sohl-Dickstein, and
  Greydanus]{hoyer2019neural}
Hoyer, S., Sohl-Dickstein, J., and Greydanus, S.
\newblock Neural reparameterization improves structural optimization.
\newblock \emph{arXiv preprint arXiv:1909.04240}, 2019.

\bibitem[Incropera et~al.(1996)Incropera, DeWitt, Bergman, Lavine,
  et~al.]{incropera1996fundamentals}
Incropera, F.~P., DeWitt, D.~P., Bergman, T.~L., Lavine, A.~S., et~al.
\newblock \emph{Fundamentals of heat and mass transfer}, volume~6.
\newblock Wiley New York, 1996.

\bibitem[Karrer \& Newman(2011)Karrer and Newman]{karrer2011stochastic}
Karrer, B. and Newman, M.~E.
\newblock Stochastic blockmodels and community structure in networks.
\newblock \emph{Physical review E}, 83\penalty0 (1):\penalty0 016107, 2011.

\bibitem[Kingma \& Ba(2014)Kingma and Ba]{kingma2014adam}
Kingma, D.~P. and Ba, J.
\newblock Adam: A method for stochastic optimization.
\newblock \emph{arXiv preprint arXiv:1412.6980}, 2014.

\bibitem[Kipf \& Welling(2016)Kipf and Welling]{kipf2016semi}
Kipf, T.~N. and Welling, M.
\newblock Semi-supervised classification with graph convolutional networks.
\newblock \emph{arXiv preprint arXiv:1609.02907}, 2016.

\bibitem[Kosterlitz \& Thouless(1973)Kosterlitz and
  Thouless]{kosterlitz1973ordering}
Kosterlitz, J.~M. and Thouless, D.~J.
\newblock Ordering, metastability and phase transitions in two-dimensional
  systems.
\newblock \emph{Journal of Physics C: Solid State Physics}, 6\penalty0
  (7):\penalty0 1181, 1973.

\bibitem[Kuramoto(1975)]{kuramoto1975self}
Kuramoto, Y.
\newblock Self-entrainment of a population of coupled non-linear oscillators.
\newblock In \emph{International symposium on mathematical problems in
  theoretical physics}, pp.\  420--422. Springer, 1975.

\bibitem[Kuramoto(1984)]{kuramoto1984chemical}
Kuramoto, Y.
\newblock Chemical turbulence.
\newblock In \emph{Chemical Oscillations, Waves, and Turbulence}, pp.\
  111--140. Springer, 1984.

\bibitem[Lauter et~al.(2015)Lauter, Brendel, Habraken, and
  Marquardt]{lauter2015pattern}
Lauter, R., Brendel, C., Habraken, S.~J., and Marquardt, F.
\newblock Pattern phase diagram for two-dimensional arrays of coupled
  limit-cycle oscillators.
\newblock \emph{Physical Review E}, 92\penalty0 (1):\penalty0 012902, 2015.

\bibitem[Li et~al.(2020{\natexlab{a}})Li, Kovachki, Azizzadenesheli, Liu,
  Bhattacharya, Stuart, and Anandkumar]{li2020neural}
Li, Z., Kovachki, N., Azizzadenesheli, K., Liu, B., Bhattacharya, K., Stuart,
  A., and Anandkumar, A.
\newblock Neural operator: Graph kernel network for partial differential
  equations.
\newblock \emph{arXiv preprint arXiv:2003.03485}, 2020{\natexlab{a}}.

\bibitem[Li et~al.(2020{\natexlab{b}})Li, Kovachki, Azizzadenesheli,
  Bhattacharya, Stuart, Anandkumar, et~al.]{li2020fourier}
Li, Z., Kovachki, N.~B., Azizzadenesheli, K., Bhattacharya, K., Stuart, A.,
  Anandkumar, A., et~al.
\newblock Fourier neural operator for parametric partial differential
  equations.
\newblock In \emph{International Conference on Learning Representations},
  2020{\natexlab{b}}.

\bibitem[Lu et~al.(2021)Lu, Jin, Pang, Zhang, and Karniadakis]{lu2021learning}
Lu, L., Jin, P., Pang, G., Zhang, Z., and Karniadakis, G.~E.
\newblock Learning nonlinear operators via deeponet based on the universal
  approximation theorem of operators.
\newblock \emph{Nature Machine Intelligence}, 3\penalty0 (3):\penalty0
  218--229, 2021.

\bibitem[Martens \& Grosse(2015)Martens and Grosse]{martens2015optimizing-kfac}
Martens, J. and Grosse, R.
\newblock Optimizing neural networks with kronecker-factored approximate
  curvature.
\newblock In \emph{International conference on machine learning}, pp.\
  2408--2417. PMLR, 2015.

\bibitem[McMahan \& Streeter(2010)McMahan and Streeter]{mcmahan2010adaptive}
McMahan, H.~B. and Streeter, M.
\newblock Adaptive bound optimization for online convex optimization.
\newblock \emph{In Proceedings of the Twenty Third Annual Conference on
  Computational Learning Theory}, 2010.

\bibitem[Otter et~al.(2017)Otter, Porter, Tillmann, Grindod, and
  Harrington]{otter2017roadmap}
Otter, N., Porter, M.~A., Tillmann, U., Grindod, P., and Harrington, H.~A.
\newblock A roadmap for the computation of persistent homology.
\newblock \emph{EPJ Data Science}, 6\penalty0 (17), 2017.

\bibitem[Penrose(2003)]{penrose2003random}
Penrose, M.
\newblock \emph{Random geometric graphs}, volume~5.
\newblock OUP Oxford, 2003.

\bibitem[Pfaff et~al.(2020)Pfaff, Fortunato, Sanchez-Gonzalez, and
  Battaglia]{pfaff2020learning}
Pfaff, T., Fortunato, M., Sanchez-Gonzalez, A., and Battaglia, P.
\newblock Learning mesh-based simulation with graph networks.
\newblock In \emph{International Conference on Learning Representations}, 2020.

\bibitem[Pikovsky et~al.(2003)Pikovsky, Kurths, Rosenblum, and
  Kurths]{pikovsky2003synchronization}
Pikovsky, A., Kurths, J., Rosenblum, M., and Kurths, J.
\newblock \emph{Synchronization: a universal concept in nonlinear sciences}.
\newblock Number~12. Cambridge university press, 2003.

\bibitem[Raissi et~al.(2019)Raissi, Perdikaris, and
  Karniadakis]{raissi2019physics}
Raissi, M., Perdikaris, P., and Karniadakis, G.~E.
\newblock Physics-informed neural networks: A deep learning framework for
  solving forward and inverse problems involving nonlinear partial differential
  equations.
\newblock \emph{Journal of Computational physics}, 378:\penalty0 686--707,
  2019.

\bibitem[Saad \& Van Der~Vorst(2000)Saad and Van Der~Vorst]{saad2000iterative}
Saad, Y. and Van Der~Vorst, H.~A.
\newblock Iterative solution of linear systems in the 20th century.
\newblock \emph{Journal of Computational and Applied Mathematics}, 123\penalty0
  (1-2):\penalty0 1--33, 2000.

\bibitem[Sarkar \& Gupte(2021)Sarkar and Gupte]{sarkar2021phase}
Sarkar, M. and Gupte, N.
\newblock Phase synchronization in the two-dimensional kuramoto model: Vortices
  and duality.
\newblock \emph{Physical Review E}, 103\penalty0 (3):\penalty0 032204, 2021.

\bibitem[Schnell et~al.(2021)Schnell, Holl, and Thuerey]{schnell2021half}
Schnell, P., Holl, P., and Thuerey, N.
\newblock Half-inverse gradients for physical deep learning.
\newblock In \emph{International Conference on Learning Representations}, 2021.

\bibitem[Sosnovik \& Oseledets(2019)Sosnovik and Oseledets]{sosnovik2019neural}
Sosnovik, I. and Oseledets, I.
\newblock Neural networks for topology optimization.
\newblock \emph{Russian Journal of Numerical Analysis and Mathematical
  Modelling}, 34\penalty0 (4):\penalty0 215--223, 2019.

\bibitem[Tarmoun et~al.(2021)Tarmoun, Franca, Haeffele, and
  Vidal]{tarmoun2021understanding}
Tarmoun, S., Franca, G., Haeffele, B.~D., and Vidal, R.
\newblock Understanding the dynamics of gradient flow in overparameterized
  linear models.
\newblock In \emph{International Conference on Machine Learning}, pp.\
  10153--10161. PMLR, 2021.

\bibitem[Tieleman \& Hinton(2012)Tieleman and Hinton]{rmsprop}
Tieleman, T. and Hinton, G.
\newblock Lecture 6.5-{RMSProp}: Divide the gradient by a running average of
  its recent magnitude.
\newblock \emph{Coursera: Neural networks for machine learning}, 2012.

\bibitem[Ummenhofer et~al.(2019)Ummenhofer, Prantl, Thuerey, and
  Koltun]{ummenhofer2019lagrangian}
Ummenhofer, B., Prantl, L., Thuerey, N., and Koltun, V.
\newblock Lagrangian fluid simulation with continuous convolutions.
\newblock In \emph{International Conference on Learning Representations}, 2019.

\bibitem[Wu et~al.(2019)Wu, Pan, Chen, Long, Zhang, and
  Yu]{wu2019comprehensive}
Wu, Z., Pan, S., Chen, F., Long, G., Zhang, C., and Yu, P.~S.
\newblock A comprehensive survey on graph neural networks.
\newblock \emph{arXiv preprint arXiv:1901.00596}, 2019.

\bibitem[Zhang et~al.(2018)Zhang, Cui, and Zhu]{zhang2018deep}
Zhang, Z., Cui, P., and Zhu, W.
\newblock Deep learning on graphs: A survey.
\newblock \emph{arXiv preprint arXiv:1812.04202}, 2018.

\bibitem[Zobeiry \& Humfeld(2021)Zobeiry and Humfeld]{zobeiry2021physics}
Zobeiry, N. and Humfeld, K.~D.
\newblock A physics-informed machine learning approach for solving heat
  transfer equation in advanced manufacturing and engineering applications.
\newblock \emph{Engineering Applications of Artificial Intelligence},
  101:\penalty0 104232, 2021.

\end{thebibliography}

\section*{Checklist}

The checklist follows the references.  Please
read the checklist guidelines carefully for information on how to answer these
questions.  For each question, change the default \answerTODO{} to \answerYes{},
\answerNo{}, or \answerNA{}.  You are strongly encouraged to include a {\bf
justification to your answer}, either by referencing the appropriate section of
your paper or providing a brief inline description.  For example:
\begin{itemize}
  \item Did you include the license to the code and datasets?
  \item Did you include the license to the code and datasets? \answerNo{The code and the data are proprietary.}
  \item Did you include the license to the code and datasets? \answerNA{}
\end{itemize}
Please do not modify the questions and only use the provided macros for your
answers.  Note that the Checklist section does not count towards the page
limit.  In your paper, please delete this instructions block and only keep the
Checklist section heading above along with the questions/answers below.

\begin{enumerate}

\item For all authors...
\begin{enumerate}
  \item Do the main claims made in the abstract and introduction accurately reflect the paper's contributions and scope?
    \answerYes{}
  \item Did you describe the limitations of your work?
    \answerYes{}
  \item Did you discuss any potential negative societal impacts of your work?
    \answerNA{}
  \item Have you read the ethics review guidelines and ensured that your paper conforms to them?
    \answerYes{}
\end{enumerate}

\item If you are including theoretical results...
\begin{enumerate}
  \item Did you state the full set of assumptions of all theoretical results?
    \answerYes{}
        \item Did you include complete proofs of all theoretical results?
    \answerYes{}
\end{enumerate}

\item If you ran experiments...
\begin{enumerate}
  \item Did you include the code, data, and instructions needed to reproduce the main experimental results (either in the supplemental material or as a URL)?
    \answerYes{}
  \item Did you specify all the training details (e.g., data splits, hyperparameters, how they were chosen)?
    \answerYes{}
        \item Did you report error bars (e.g., with respect to the random seed after running experiments multiple times)?
    \answerYes{}
        \item Did you include the total amount of compute and the type of resources used (e.g., type of GPUs, internal cluster, or cloud provider)?
    \answerYes{}
\end{enumerate}

\item If you are using existing assets (e.g., code, data, models) or curating/releasing new assets...
\begin{enumerate}
  \item If your work uses existing assets, did you cite the creators?
    \answerYes{}
  \item Did you mention the license of the assets?
    \answerNA{}
  \item Did you include any new assets either in the supplemental material or as a URL?
    \answerNA{}
  \item Did you discuss whether and how consent was obtained from people whose data you're using/curating?
    \answerTODO{}
  \item Did you discuss whether the data you are using/curating contains personally identifiable information or offensive content?
    \answerNA{}
\end{enumerate}

\item If you used crowdsourcing or conducted research with human subjects...
\begin{enumerate}
  \item Did you include the full text of instructions given to participants and screenshots, if applicable?
    \answerNA{}
  \item Did you describe any potential participant risks, with links to Institutional Review Board (IRB) approvals, if applicable?
    \answerNA{}
  \item Did you include the estimated hourly wage paid to participants and the total amount spent on participant compensation?
    \answerNA{}
\end{enumerate}

\end{enumerate}


\newpage
\appendix



\out{
\subsection{Fast and slow  Dynamics of Gradient Flow}
$\ba{M}$ is positive semi-definite because for any vector $\vv \in \R^n$ we have $\vv^T \ba{M} \vv = \E\br{\pa{\vv^T \del \L}^2} \geq 0 $. 
Thus, $\ba{M}$ is Hermitian with a spectral expansion $\ba{M} = \sum_i m_i \psi_i \psi_i^T$.
Using \eqref{eq:GD}, we can show that the dynamics of $\vw$ along different $\psi_i$ are orthogonal to each other
\begin{align}
    \E\br{ \pa{\psi_i^T{d\vw \over dt }}\pa{\psi_j^T{d\vw \over dt }}} = \eps^2 \psi_i^T \ba{M} \psi_j = \eps_i\eps_j m_i \delta_{ij}.
    \label{eq:psi-dw-ortho}
\end{align}
If $\eps$ is constant and not adaptive, $\vw$ evolves faster along modes $\psi_i$ with larger eigenvalues $m_i$, since if $ m_i> m_j$ we have 
\begin{align}
    \eps^2 m_i = \E\br{ 
    \pa{\psi_i^T{d\vw \over dt }
    }^2} > \E\br{
    \pa{\psi_j^T{d\vw \over dt } 
    }^2 } = \eps^2 m_j 
    \label{eq:psi-dw-dt}
\end{align}
This is precisely what optimizers such as Adam try to address.
Ideally, having $\eps_i = \eta / \sqrt{m_i} $ would make up for the rate difference, resulting in a uniform GD where no mode $\psi_i$ is evolving more slowly than others. 
However, calculating $m_i$ is $O(n^3)$ and can be quite expensive during optimization, hence the approximate version \eqref{eq:Adam-eps} is used. 
We show here that in many cases it is worthwhile to do a full correction to GD using $\ba{M}$, at least in early stages of the optimization.

The eigenvalues of $\ba{M}$ determine the rate of evolution of the overlaps along each of its eigenvectors. 
In particular, if some eigenvalue $m_{slow}\ll \mathrm{mean}_i[m_i]$ is much smaller than the mean, the evolution of $\vw$ along $\psi_{slow}$ will be much slower than other directions. 
Therefore, we will refer to all  $\psi_{slow}$ as the \textit{slow modes} of the optimization problem. 
Conversely, the $\psi_{fast}$ where $m_{fast} \gg \mathrm{mean}_i[m_i]$ will be referred to as the \textit{fast modes}. 
 
When  running GD, the maximum change in $\vw$ is bounded to ensure numerical stability. 
Because of the orthogonality in \eqref{eq:psi-dw-ortho}, we can enforce a numerical bound $\eta $ via  
\begin{align}
    \max_i \E\br{ \pa{\psi_i^T{d\vw \over dt }}^2}&\leq \eta^2  \Rightarrow  \eps_{max}\leq{\eta \over \sqrt{m_{max}} }, \cr
    \E\br{
    \pa{\psi_{i}^T{d\vw \over dt }}^2 
    } &\leq \eta^2  {m_{i} \over m_{max}}.
    \label{eq:eps-M-constraint}
\end{align}
where $m_{max}= \max_i m_i$ is the largest eigenvalue of $M$. 
Therefore, the learning rate $\eps$ is bounded by the largest eigenvalues (fastest modes). 
\out{
This further constrains how fast the slow modes can evolve because for $m_{min} = \min_i[m_i]$ it leads to 
\nd{find ref}
\begin{align}
    \E\br{
    \pa{\psi_{min}^T{d\vw \over dt }}^2 
    }= \eps^2 m_{min} < \eta^2  {m_{min} \over m_{max}}.
    \label{eq:psi-m-min-max}
\end{align}
\nd{Add note about $m_i\ne 0$. }
Again, note that because $M$ is time-dependent, the constraint \eqref{eq:psi-m-min-max} defines a time-dependent learning rate $\eps(t)$ for a fixed $\eta$.  
Assuming that the distance from the initial $\vw(t=0)$ to the local minimum $\vw^*$ is on average similar in all $\psi_i$ directions, meaning $\|\psi_i^T (\vw(0)-\vw^*)\|$ is similar for all $i$. 
If so, the constraint on the rate of convergence stems from the ratio $m_{min}/m_{max}$.
}
Thus, in order to speed up \textit{convergence}, we need to focus on the slow modes, which can be achieve with  a reparametrization of $\vw$. 
\nd{find similarity with preconditioning}


}


\section{Extended Derivations}

\subsection{Estimating $G_t$ at $t\to0$ \label{ap:G-estimate}}
$G = \E[\del\L\del \L^T]$ can be written in terms of the moments of the random variable $\mW=\{\vw\} $ using the Taylor expansion of $\L$ around $\vw_i=0$ plugged into
$G = \E[\del\L\del \L^T]$
\begin{align}
    {\ro\L(\vw)\over \ro \vw_i}& \approx \left.\sum_{k=0}^\infty{1\over k!}  \br{\vw^T{\ro \over \ro \vv}}^k {\ro \L(\vv) \over \ro \vv_i}\right|_{\vv \to0} 
    \label{eq:L-Taylor}
\end{align}
\begin{align}
    G_{ij}(t) 
    &= \left.\sum_{p,q=1}^\infty {1\over p!q!} 
    \sum_{\{i_a\}} 
    \E_P\br{\vw_{i_1} \dots \vw_{i_{p+q}}}
    {\ro^{p+1} \L \over \ro \vv_{i_1}\dots \ro \vv_{i_p}\ro \vv_i }
    {\ro^{q+1} \L \over \ro \vv_{i_{p+1}}\dots \ro \vv_{i_{p+q}}\ro \vv_j }\right|_{\vv \to0}
    \label{eq:M-moments}
\end{align}
where the sum over $\{i_a\}$ means all choices for the set of indices $i_1\dots i_{p+q}$. 
\Eqref{eq:M-moments} states that we need to know the order $p+q$ moments $\E_P[\vw_{i_1}\dots \vw_{i_{p+q}}]$ of $\mW $ to calculate $G$.
This is doable in some cases. 
For example, we can use a normal distribution is used to initialize $\vw(t=0)\in \R^n $. 
The local minima of $\L$ reachable by GD need to be in a finite domain for $\vw$. Therefore, we can always rescale and redefine $\vw$ such that its domain satisfies $\|\vw\|_2 \leq 1 $.  
This way we can initialize with normalized vectors $\vw(0)^T\vw(0) = 1$ using   $\sigma=1/\sqrt{n}$ and get 
\begin{align}
    \vw_{(i)j}(0) &= 
    \mathcal{N}\pa{0,n^{-1/2}}, 
    & \\
    \E_P\br{ \prod_{a=1}^p \vw_{i_a}} & = \delta_{i_1\dots i_p} {(p-1)!!\over  n^{p/2}}\mathrm{even}(p) 
    \label{eq:init-w-normal}
\end{align}
where $\delta_{i\dots j}$ is the Kronecker delta, $(p-1)!! = \prod_{k=0}^{[p/2]} (p-1-2k)$ and $\mathrm{even}(p)=1$ if $p$ is even and $0$ if it's odd. 
When the number of variables $n\gg 1$ is large, order $p>2$ moments are suppressed by factors of $n^{-p/2}$. 
Plugging \eqref{eq:init-w-normal} into \eqref{eq:M-moments} and 
defining the Hessian $\mathcal{H}_{ij}(\vw)\equiv \ro^2 \L(\vw)/\ro \vw_i \ro \vw_j$ 
we have 
\begin{align}
    G_{ij}(t=0)& 
    = {1\over n}\left.\br{\mathcal{H}^2}_{ij}\right|_{\vw \to 0} + O(n^{-2}). 
    \label{eq:M-Hessian2}
\end{align}
\out{
Thus, with normal initialization, the expected convergence rate is the mean of the squares of eigenvalues of the Hessian 
\begin{align}
    \E_P\br{{d\L\over dt}}& \approx \left. {1\over n} \Tr\br{\mathcal{H}^2}\right|_{\vw \to 0} 
\end{align}
}
Here the assumption is that the derivatives $\ro_\vw^p \L $ are not $n$ dependent after rescaling the domain such that $\|\vw\|\leq 1$. 
In the experiments in this paper this condition is satisfied.

\subsection{Computationally efficient implementation \label{ap:implement} } 
We focus on speeding up the early stages, in which we use the Hessian $\mathcal{H}$ is $ G(0)\approx \mathcal{H}^2/n$ as in Newton's method. 
To control the learning rate $\eta $ in Newton's method $\ro_t \vw = -\eta \mathcal{H}^{-1} \del \L $, we need to work with the normalized matrix $\mathcal{H}/h_{max}$. 
Next, we want an approximate Jacobian $J\sim  (\mathcal{H}/h_{max})^{-1/2}$ written as a expansion. 
To ensure we have a matrix whose eigenvalues are all less than 1, we work with $\mH\equiv (1-\xi)\mathcal{H}/h_{max}$, where $h_{max}=\sqrt{\lambda_{max}}$ and $\xi\ll 1$. 
To get an $O(qn^2)$ approximation for 
$J$
we can take the first $q$ terms in the binomial expansion as  
\begin{align}
    \mH^{-1/2} &\approx I- {1\over 2} (I-\mH) -{3\over 4} (I-\mH)^2 + \dots.
    \label{eq:M14-rough}
\end{align}
Since $\mH$ is positive semi-definite and its largest eigenvalue is $1-\xi <1 $, the sum in \eqref{eq:M14-rough} can be truncated after $q\sim O(1)$ terms. 
The first $q$ terms of 
\eqref{eq:M14-rough} can be implemented as a $q$ layer GCN with aggregation function $f(\mathcal{H}) = I-\mathbf{H}$ and residual connections.
We choose $q$ to be as small as $1$ or $2$, as larger $q$ may actually slow down the optimization. 
Note that computing $f(\mathcal{H})^q\theta $ is $O(qn^2)$ because we do not need to first compute $f(\mathcal{H})^q$ (which is $O(qn^3)$). 
Instead, we use the forward pass through the GCN layers, which with linear activation is case is $\vv_{i+1} = f(\mathcal{H})\vv_i$ ($O(n^2)$). 
This way, $q$ layers with $\vv_1 = \theta$ implements $f(\mathcal{H})^q\theta $ with $O(qn^2h)$ for $\theta \in \R^{n\times h}$ ($h \ll n$). 

\paragraph{GCN Aggregation Rule}
To evaluate $\mathbf{H}$ we need to estimate the leading eigenvalue $h_{max}$. 
In the graph problems we consider we have $\mathcal{H}\approx L = D-A$, where $L$ is the graph Laplacian and $D_{ij} = \delta_{ij} \sum_k A_{ik}$ is the degree matrix. 
In this case, instead of dividing by $h_{max}$ an easy alternative is
\begin{align}
    \mathbf{H} = D^{-1/2} L D^{-1/2} = I - A_s 
\end{align}
where $A_s = D^{-1/2} A D^{-1/2}$, and we chose this symmetrized form instead of $D^{-1}L$ because the Hessian is symmetric. 
When the edge weights are positive and degrees of nodes are similar $D_{ii} \approx k$ (e.g. mesh, lattice, RGG, SBM), we expect $h_{max} \sim O(k) $.  
This is because when degrees are similar $L\approx kI-A $.
$L$ is PSD as $v^T Lv = \sum_{ij} A_{ij}(v_i-v_j)^2 \geq 0$. 
Therefore, when $L= kI-A$ the largest eigenvalue $\alpha_1$ of $A$ is bounded by $\alpha_1 \leq k$. 
When the graph is grid-like, its eigenvectors are waves on the grid and the eigenvalues are the Fourier frequencies, $\pm k/ m$ for $m\in [1\to n/2]$. 
This bounds $h_{max}\leq k-(-k) = 2k$. 
and when the graph is mostly random, its spectrum follows the Wigner semi-circle law, which states most eigenvalues are concentrated near zero, hence $h_{max} \approx k $.
This goes to say that choosing $\mathbf{H} = I- A_s $ should be suitable for numerical stability, as the normalization in $A_s$ is comparable to $\mathcal{H}/h_{max}$. 
\out{
The top eigenvector of $A$ is approximately $\mathbf{1} = (1,\dots, 1)/\sqrt{n}$ with eigenvalue $k$. 
The smallest eigenvalue of $A$
This means that $L$ and $A$ will have approximately the same eigenvectors $\psi_i$ and the eigenvalues $\lambda_i$ of $L$ are related to the eigenvalues $\alpha_i$ of $A $ by $\lambda_i \approx k - \alpha_i $.  
Given any vector $\vv \in \R^n$ and using  spectral expansion, we have 
\begin{align}
    \mathcal{H}^q \vv &= \sum_i h_i^{q}  (\psi_i^T \vv)\psi_i & \Rightarrow &
    & 
    h_{max} \approx \frac{\mathbf{1}^T\mathcal{H}^2 \mathbf{1}}{\mathbf{1}^T \mathcal{H} \mathbf{1}} = \frac{\sum_{ijk} \mathcal{H}_{ij}\mathcal{H}_{jk}}{\sum_{ij} \mathcal{H}_{ij}} 
    \label{eq:m-max-approx-D2-0}
\end{align}
where, since $\mathcal{H}$ is positive semi-definite, for $q>1$ the leading eigenvector $\psi_{max}$ quickly dominates the sum and we have $\mathcal{H}^q \vv \approx h_{max}^q (\psi_{max}\vv)\psi_{max} $.
Here we chose $q=2$ to get a crude approximation for $h_{max}$.

The generalized Perron-Frobenius theorem \citep{berman1994nonnegative}, state the components of the leading eigenvector $\psi_{max}$ should be mostly positive. 
Therefore, we chose $\vv = \mathbf{1}/\sqrt{n}$, where $\mathbf{1}= (1,\dots, 1)$, which should be close to the actual $\psi_{max}$.

If we think of $\mathcal{H}$ as a weighted adjacency matrix of a graph with $n$ vertices, the vector component $D_i = [\mathcal{H}\mathbf{1}]_i= \sum_j\mathcal{H}_{ij} $ is the weighted degree of node $i$.
Hence \eqref{eq:m-max-approx-D2-0} becomes 
\begin{align}
    h_{max} \approx {\sum_j D_{jj}^2 \over \sum_i D_{ii} } = {\|D\|^2 \over \Tr[D]}. 
\end{align}

\nd{The issue is when $\mathcal{H}= L = D-A$, the degree calculated form $\mathcal{H}$ is zero for all nodes. Do what is the correct $D$? and did we miss something in eq. above for $h_{max}$
}
Thus, for $\mathbf{H}$ we have 
\begin{align}
    aaaa
\end{align}
This is similar to the graph diffusion operator $D^{-1} \mathcal{H} $ and $D^{-1/2}\mathcal{H} D^{-1/2}$ which are used as the aggregation functions in GCN (here $D_{ij} = D_i \delta_{ij}$ is the degree matrix).

When using modern optimizers, the adaptove learning rates  is a term within $\ba{K}$.
This will interfere with the optimal choice for $\kappa$ and hence our GCN aggregation function. 
Therefore, in practice, we do not fix the coefficients for $\mathcal{H}$ and $\mathcal{H}^2$ in our GCN. 
Instead, we use single or two layer GCN with aggregation function $f(\mathcal{H} )=D^{-1/2}\mathcal{H} D^{-1/2}$. 
}
\out{
This is a crude approximation, but it serves our goal by being computationally efficient. 
Note that, while computing $M^q$ is $O(qn^3)$, because $M\mathbf{1}$ is $O(n^2)$, computing $M^q\mathbf{1}$ is $O(qn^2)$.  
\nd{Todo: Check if $[M,K]=0$ yields maximum speedup.}
}

\subsection{Network Synchronization} 

The HK model's dynamics are follows:
\begin{align}
    \frac{d\vw_i}{dt} &= c \sum_jA_{ji}\br{\cos\Delta_{ij} - s_1\sin\Delta_{ij}} \cr
    &+ s_2\sum_{k,j}  A_{ij} A_{jk} \br{\sin\pa{\Delta_{ji}+\Delta_{jk}} - \sin\pa{\Delta_{ji}-\Delta_{jk}}}  \cr
    &+A_{ij} A_{ik} \sin\pa{\Delta_{ji}+\Delta_{ki}}. 
    \label{eq:Hopf-Kuramoto-GF}
\end{align}
where $c$, $s_1,s_2$ are the governing parameters. 
\Eqref{eq:Hopf-Kuramoto-GF} is the GF equation for the 
following loss function (found by integrating \Eqref{eq:Hopf-Kuramoto-GF}):
\begin{align}
    \L(\vw) &= {c\over \eps} \sum_{i,j} A_{ji}\br{ \sin\Delta_{ij} + s_1\cos\Delta_{ij}} 
    \cr 
    &+ {s_2\over 2\eps}\sum_{i,k,j} A_{ij} A_{jk}  \big[\cos\pa{\Delta_{ji}+\Delta_{jk}}\cr 
    &+\cos\pa{\Delta_{ji}-\Delta_{jk}}\big] \label{eq:Loss-Hopf-Kuramoto-app}
\end{align}


%
 \begin{figure}[t!]
  \begin{center}
    \includegraphics[width=1\linewidth]{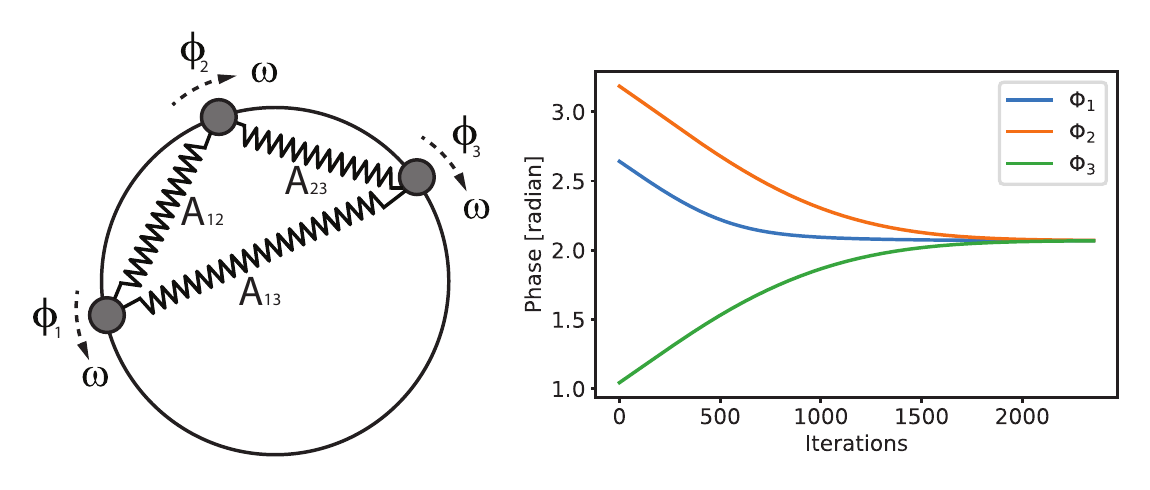}
  \end{center}
        \vskip -0.2in
    \caption{Illustration of Kuramoto oscillators. Phase changes during the synchronization.}
      \label{fig:kuramoto_example}
\end{figure}

\section{Experiment details and additional results}
\label{app:exp}

\subsection{Network Synchronization}

Network synchronization \citep{pikovsky2003synchronization} optimizes a network of coupled oscillators until they reach  at the same frequency, known as synchronization. Kuramoto model \citet{kuramoto1975self,kuramoto1984chemical} are widely used for synchronization problems, which have  profound impact on engineering, physics and machine learning \cite{schnell2021half}.

As shown in Fig. \ref{fig:kuramoto_example},  Kuramoto model describes the behavior of a large set of coupled oscillators. Each oscillator is defined by an angle $\theta_i = \omega_i t+ \vw_i$, where $\omega_i$ is the frequency and $\vw_i$ is the phase. 
We consider the case where $\omega_i=0$.   The coupling strength is represented by a graph  $A_{ij}\in \R$.
Defining $\Delta_{ij}\equiv \vw_i-\vw_j$,
the dynamics of the phases $\vw_i(t)$ in the Kuramoto model follows the following equations:
\begin{align}
    \frac{d\vw_i}{dt} = -\eps \sum_{j=1}^{n} A_{ji} \sin{\Delta_{ij}},   \
    \L(\vw) = \sum_{i,j=1}^{n} A_{ji} \cos\Delta_{ij}.
    \label{eq:Kuramoto-app}
\end{align}
Our goal  is to minimize the phase drift $d\vw_i/dt$ such that   the oscillators are synchronized. 
We further consider a more general version of the Kuramoto model: Hopf-Kuramoto (HK)  model  \citet{lauter2015pattern}, which  includes second-order interactions.

We experiment with both the Kuramoto model and Hopf-Kuramoto model. 
Existing numerical methods directly optimize the loss $\L(\vw)$ with gradient-based algorithms, which we refer to as \textit{linear}. 
We apply our method to reparametrize the phase variables $\vw$  and  speed up  convergence towards synchronization.

\textbf{Implementation.}
For early stages, we use a GCN with the aggregation function derived from the Hessian 
which for the Kuramoto model simply becomes $\mathcal{H}_{ij}(0) = \ro^2\L/\ro\vw_i\ro \vw_j|_{\vw\to 0} = A_{ij}- \sum_k A_{ik}\delta_{ij} = -L_{ij} $, where $L= D-A$ is the graph Laplacian of $A$. 
We found that NR in the early stages of the optimization gives more speed up. 
We implemented the hybrid optimization described earlier, where we reparemtrize the problem in the first $100$ iterations and then switch to the original \textit{linear} optimization for the rest of the optimization. 

We experimented with three Kuramoto oscillator systems with different coupling structures: square lattice, circle graph, and tree graph. 
For each system, the phases are randomly initialized between $0$ and $2\pi$ from uniform distribution. 
We let the different models run until the loss  converges ($10$ patience steps for early stopping, $10^{-15}$ loss fluctuation limit).

\subsection{Kuramoto Oscillator}
\label{app:kuramoto}


\begin{figure}
 \centering
 \includegraphics[width=\linewidth,
 ]{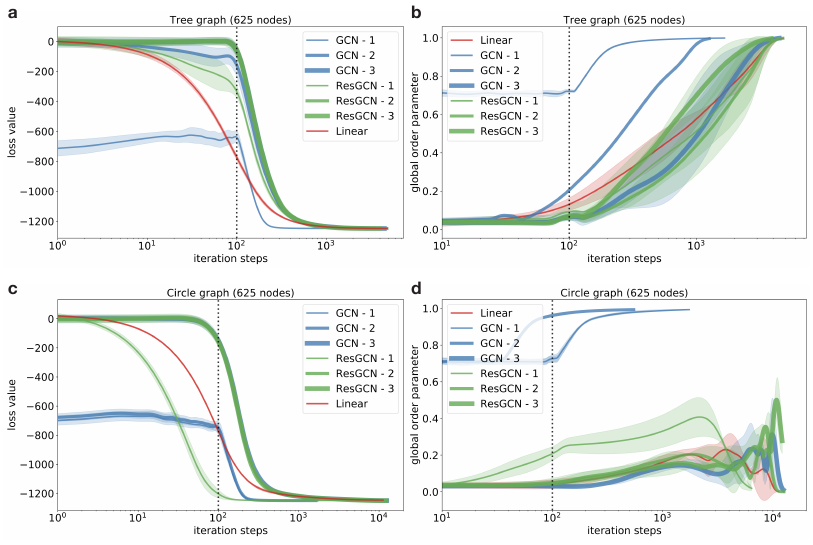}
 \caption{Kuramoto model on (a,b) tree and (c,d) circle graph. a,c) loss curves evolution, while b,d) global order parameter in each iteration steps}
 \label{fig:Kuramoto_tree_circle}
\end{figure}

\begin{figure}
 \centering
 \includegraphics[width=\linewidth,
 ]{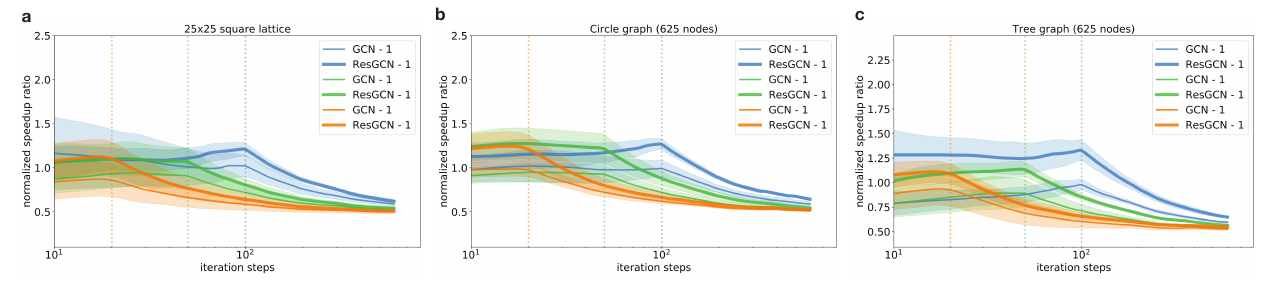}
 \caption{Kuramoto-model on different graph structures. We use the GCN model and switch to the Linear model after different iteration steps: orange - 20 steps, green - 50 steps, and blue - 100 steps. The plot shows that each GCN iteration step takes longer.  }
 \label{fig:Kuramoto_gcn_switch}
\end{figure}

\begin{figure}
 \centering
 \includegraphics[width=\linewidth,
 ]{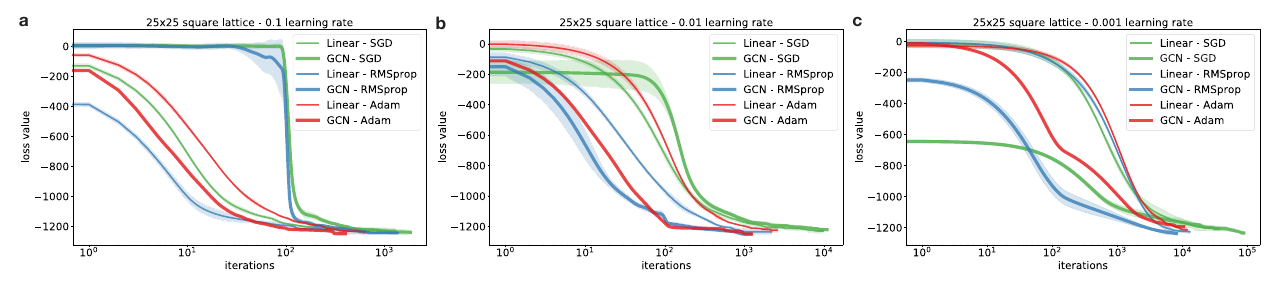}
 \caption{Kuramoto-model on $25 \times 25$ square lattice. We use the GCN and Linear model with different optimizers and learning rates. The Adam optimizer in all cases over-performs the other optimizers.}
 \label{fig:Kuramoto_lr_opt}
\end{figure}

\begin{figure}
    \centering
    \includegraphics[width=\linewidth,
    ]{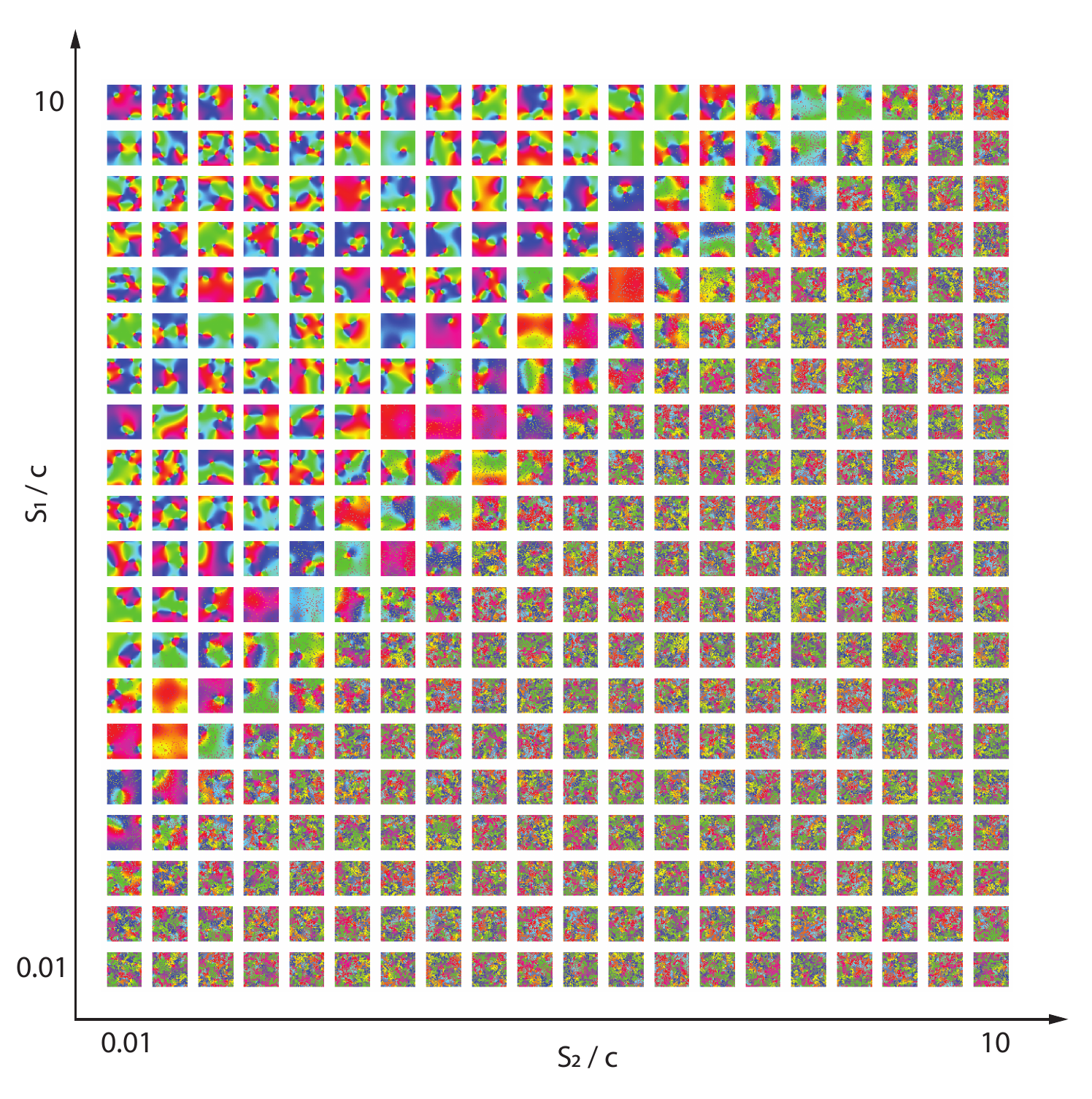}
    \caption{Hopf-Kuramoto model on a square lattice ($50\times 50$) - phase pattern. We can distinguish two main patterns - organized states are on the left part while disorganized states are on the right part of the figure. In the experiments $c = 1$ for the simplicity.}
    \label{fig:HK-pattern}
\end{figure}

\paragraph{Relation to the XY model}
The loss \eqref{eq:Kuramoto} is also identical to the Hamiltonian (energy function) of a the classical XY model (\cite{kosterlitz1973ordering}), a set of 2D spins $s_i$ with interaction energy given by $\L= \sum_{i,j} A_{ij} s_i\cdot s_j = \sum_{i,j} A_{ji} \cos(\vw_j - \vw_i) $. 
In the XY model, we are also interested in the minima of the energy. 

\paragraph{Method}
In our model, we first initialize random phases between $0$ and $2\pi$ from a uniform distribution for each oscillator in a $h$ dimensional space that results in $N \times h$ dimensional vector $N$ is the number of oscillators. 
Then we use this vector as input to the GCN model, which applies $ D^{-1/2}\hat{A}D^{-1/2}$ , $\hat{A} = A + I$ propagation rule, with LeakyRelu activation. 
The final output dimension is $N \times 1$, where the elements are the phases of oscillators constrained between $0$ and $2\pi$. 
In all experiments for $h$ hyperparameter we chose $10$. Different $h$ values for different graph sizes may give different results. Choosing large $h$ reduces the speedup significantly.
We used Adam optimizer by $0.01$ learning rate. 

\subsection{MNIST image classification}
\label{app:mnist}
Here, we introduce our reparametrization model for image classification on the MNIST image dataset. First, we test a simple linear model as a baseline and compare the performance to the GCN model.  We use a cross-entropy loss function, Adam optimizer with a 0.001 learning rate in the experiments, and Softmax nonlinear activation function in the GCN model. We train our models on $100$ batch size and $20$ epochs. In the GCN model, we build the $\mH$ matrix  (introduced in the eq. \ref{eq:M14-rough}) from the covariance matrix of images and use it as a propagation rule. In the early stages of the optimization, we use the GCN model until a plateau appears on the loss curve then train the model further by a linear model. We found that the optimal GCN to linear model transition is around 50 iterations steps. Also, we discovered that wider GCN layers achieve better performance; thus, we chose $500$ for the initially hidden dimension. According to the previous experiments, the GCN model, persistent homology (\ref{app:homology}), and Kuramoto (\ref{app:kuramoto}) model speedups the convergence in the early stages (Fig. \ref{fig:mnist})

\begin{figure}
    \centering
    \includegraphics[width=\linewidth,
    ]{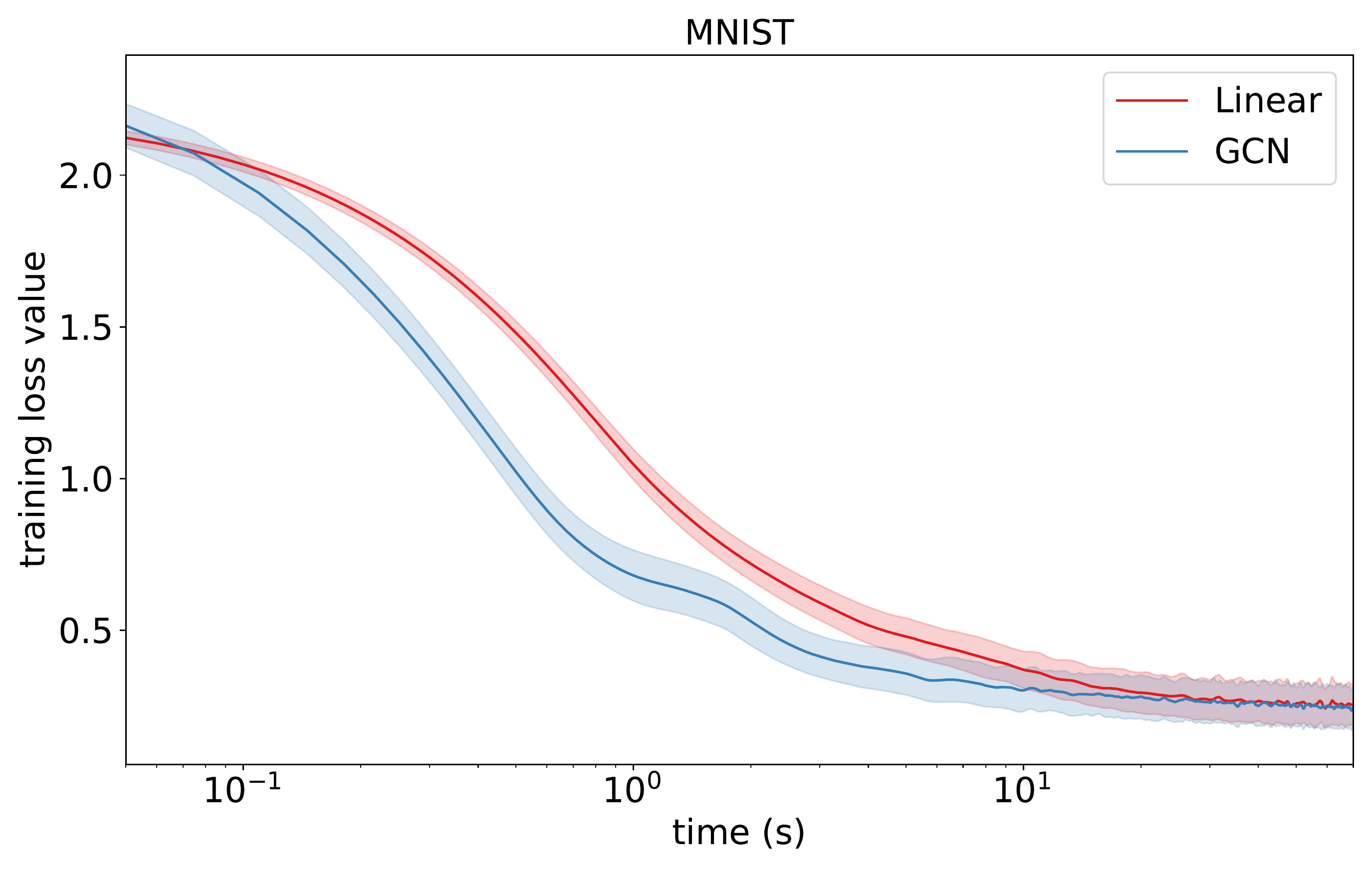}
    \caption{MNIST image classification. Red curve shows a linear model ($92.68 \%$ accuracy). Blue curve is the reparametrized model, where at step $50$ we switch from GCN to Linear model ($92.71 \%$ accuracy). }
    \label{fig:mnist}
\end{figure}

\subsection{Persistent Homology}
\label{app:homology}
\paragraph{Overview.} 
Homology describes the general characteristics of data in a metric space, and is categorized by the order of its features. Zero order features correspond to connected components, first order features have shapes like "holes" and higher order features are described as "voids".  

A practical way to compute homology of a topological space is through forming simplicial complexes from its points. This enables not only fast homology computation with linear algebra, but also approximating the topological space with its subsets. 

In order to construct a simplicial complex, a filtration parameter is needed to specify the scope of connectivity. Intuitively, this defines the resolution of the homological features obtained. A feature is considered "persistent" if it exists across a wide range of filtration values. In order words, persistent homology seeks features that are scale-invariant, which serve as the best descriptors of the topological space.

There are different ways to build simplicial complexes from given data points and filtration values. Czech complex, the most classic model, guarantees approximation of a topological space with a subset of points. However, it is computationally heavy and thus rarely used in practice. Instead, other models like the Vietoris-Rips complex, which approximates the Czech complex, are preferred for their efficiency \citep{otter2017roadmap}. Vietoris-Rips complex is also used in the point cloud optimization experiment of ours and \cite{gabrielsson2020topology, carriere2021optimizing}.

\paragraph{Algorithm Implementation.} 
Instead of optimizing the coordinates of the point cloud directly, we reparameterize the point cloud as the output of the GCN model. To optimize the network weights, we chose identity matrix with dimension of the point cloud size as the fixed input.

To apply GCN, we need the adjacency matrix of the point cloud. Even though the point cloud does not have any edges, we can manually generate edges by constructing a simplicial complex from it. The filtration value is chosen around the midpoint between the maximum and minimum of the feature birth filtration value of the initial random point cloud, which works well in practice.

Before the optimization process begins, we first fit the network to re-produce the initial random point cloud distribution. This is done by minimizing MSE loss on the network output and the regression target. 

Then, we begin to optimize the output with the same loss function in \cite{gabrielsson2020topology, carriere2021optimizing}, which consists of topological and distance penalties. The GCN model can significantly accelerates convergence at the start of training, but this effect diminishes quickly. Therefore, we switch the GCN to the linear model once the its acceleration slows down. We used this hybrid approach in all of our experiments.

\paragraph{Hyperparameter Tuning.} 


We conducted extensive hyperparameter search to fine tune the GCN model, in terms of varying hidden dimensions, learning rates and optimizers. We chose the setting of 200 point cloud with range 2.0 for all the tuning experiments. 

Fig. \ref{fig:GCN_vary_dims} shows the model convergence with different hidden dimensions. We see that loss converges faster with one layer of GCN instead of two. Also, convergence is delayed when the dimension of GCN becomes too large. Overall, one layer GCN model with $h1,h2=8,6$ generally excels in performance, and is used in all other experiments.

Fig. \ref{fig:Final_Loss_Heatmap},\ref{fig:Training_Speedup_Heatmap},\ref{fig:Total_Speedup_Heatmap} shows the performance of the GCN model with different prefit learning rates, train learning rates and optimizers. From the results, a lower prefit learning rate of 0.01 combined with a training learning rate below 0.01 generally converges to lower loss and yields better speedup. For all the optimizers, default parameters from the Tensorflow model are used alongside varying learning rates and the same optimizer is used in both training and prefitting. Adam optimizer is much more effective than RSMProp and SGD on accelerating convergence. For SGD, prefitting with learning rate 0.05 and 0.1 causes the loss to explode in a few iterations, thus the corresponding results are left as blank spaces.

\begin{figure}[htb]
    \centering
    \includegraphics[width=\linewidth
    ]{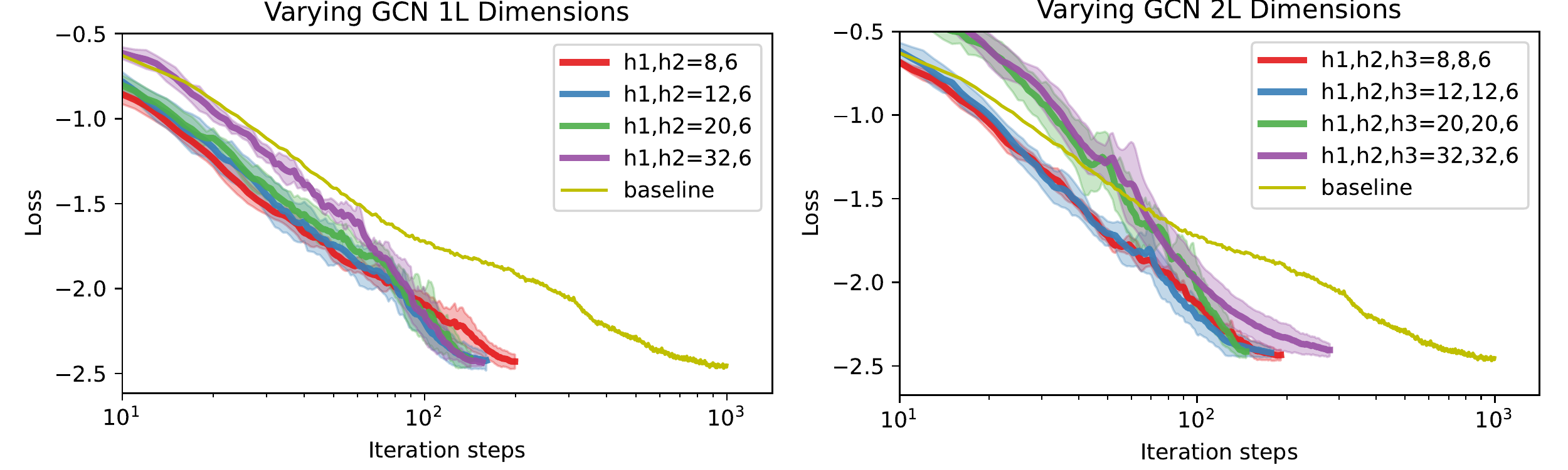}
    \caption{GCN Hyperparameter Comparison. We recommend using one layer GCN model with $h1,h2=8,6$.}
    \label{fig:GCN_vary_dims}
    
    \vspace{0.25in}
    \centering
    \includegraphics[width=\linewidth]{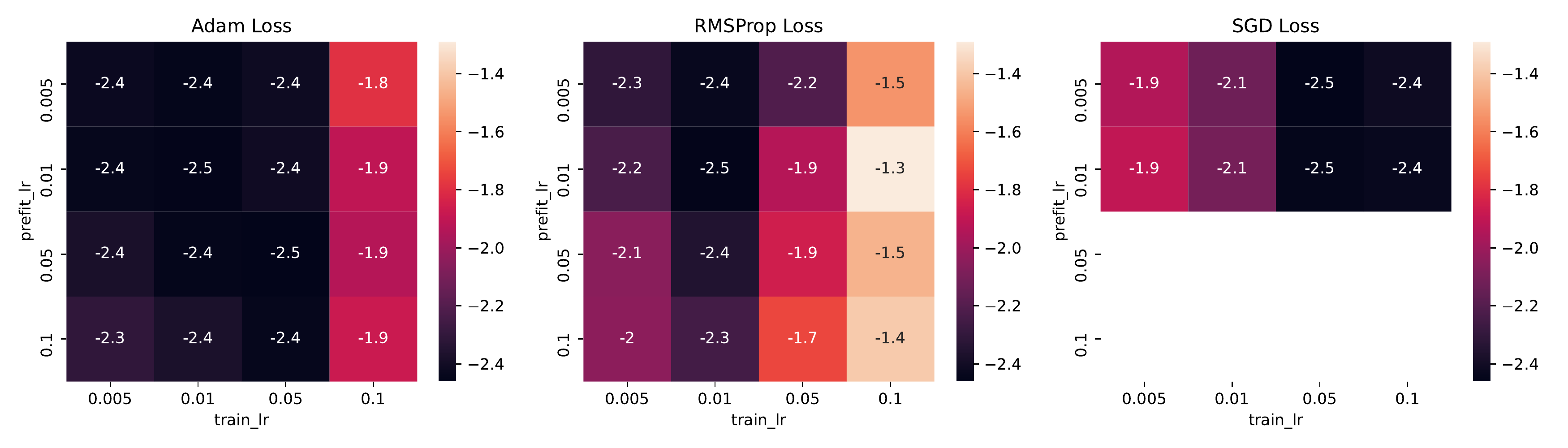}
    \caption{Converged loss of different learning rates and optimizers.}
    \label{fig:Final_Loss_Heatmap}
    
    \vspace{0.25in}
    \centering
    \includegraphics[width=\linewidth]{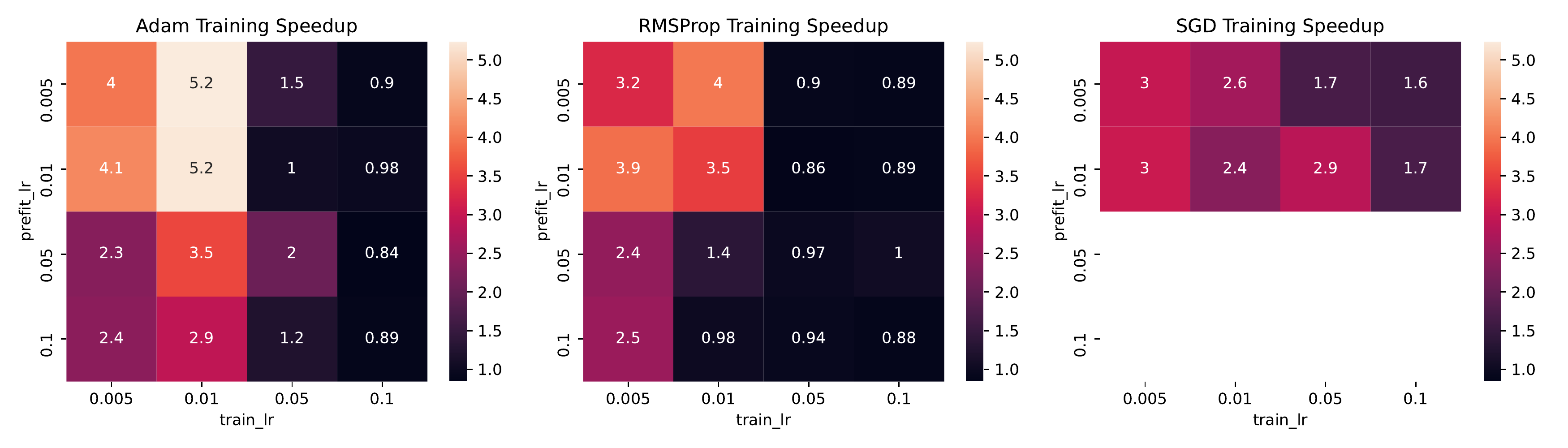}
    \caption{Training Speedup of different learning rates and optimizers.}
    \label{fig:Training_Speedup_Heatmap}
    
    \vspace{0.25in}
    \centering
    \includegraphics[width=\linewidth]{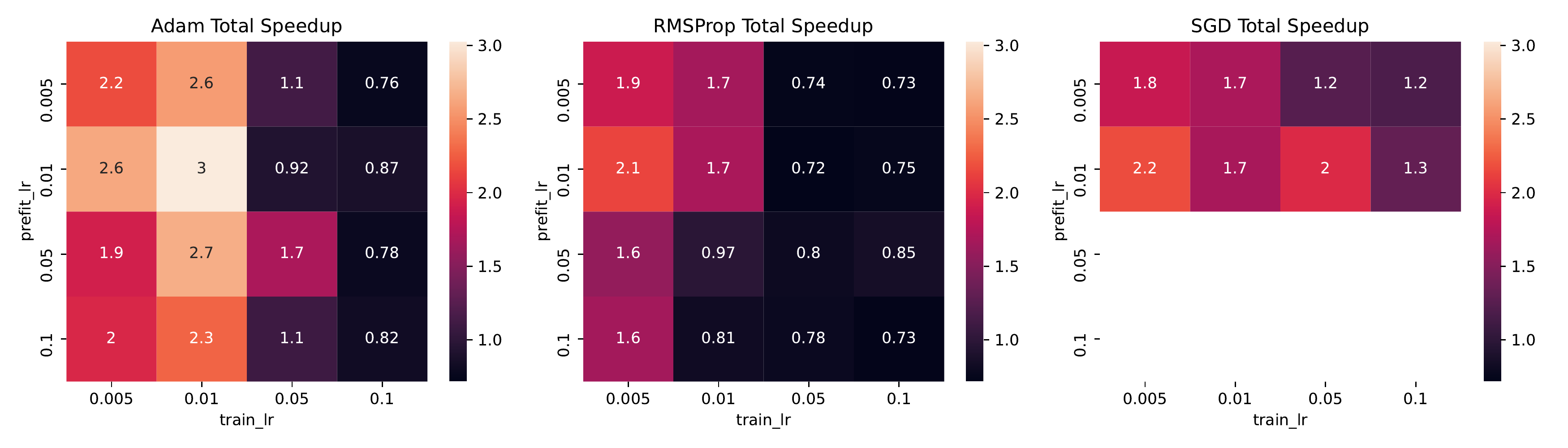}
    \caption{Total Speedup of different learning rates and optimizers.}
    \label{fig:Total_Speedup_Heatmap}
    
\end{figure}

\begin{figure}[htb]
    \vspace{0.5in}
    \centering
    \includegraphics[width=\linewidth]{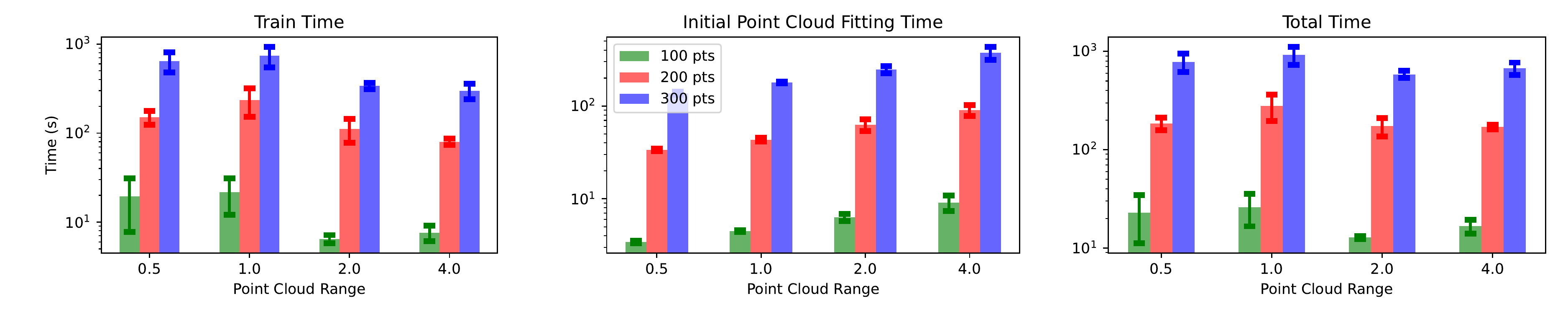}
    \caption{Training, prefitting and total time}
    \label{fig:GCN_runtime_regular}
    
    \vspace{0.25in}
    \centering
    \includegraphics[width=\linewidth]{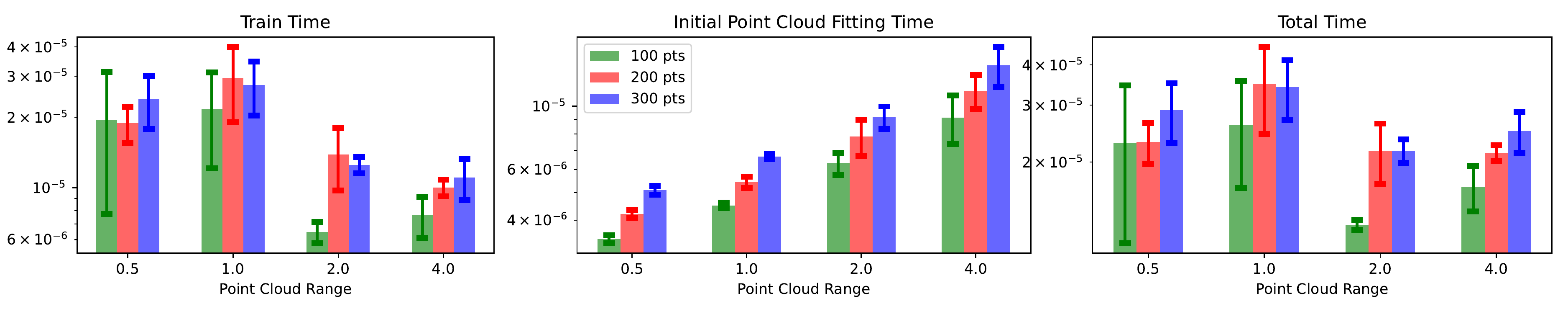}
    \caption{Density Normalized training, prefitting and total time. We normalize the runtime by $N^3$, where $N$ is the point cloud size. This is because persistence diagram computation has time complexity of $\mathcal{O}(N^3)$}.
    \label{fig:GCN_runtime_normalized}
\end{figure}

\paragraph{Detailed Runtime Comparison.} 
Fig. \ref{fig:GCN_runtime_regular} and \ref{fig:GCN_runtime_normalized} shows how training, initial point cloud fitting and total time evolve over different point cloud sizes and ranges. Training time decreases significantly with increasing range, especially from $1.0$ to $2.0$. This effect becomes more obvious with density normalized runtime. On the other hand, prefitting time increases exponentially with both point cloud range and size. Overall, the total time matches the trend of training time, however the speed-up is halved compared to training due to the addition of prefitting time.

\out{
\section{New experiments}

\subsection{Dynamic updating} 
Is it possible to efficiently update $M$ during optimization? 
For example, say we are doing classification on MNIST with SGD. 
The parameters of our model become $\vw$. 
SGD allows for a some stochasticity, which may allow us to approximate $\ba{M}$ from mini-batches. 
The main issue whether the computations are done efficiently, without significantly changing the time complexity. 
Computing $M$ requires finding the gradients $\ro \L/\ro \vw $. 
In the original problem, $\ro \L/\ro \vw$ needed to be calculated anyway for GD. 
Can we use the the $b$ vectors in the mini-batch as a rank $b$ approximation of $\ba{M}$? 
We could also add some memory, keeping a few of the past batches, but first consider only the most recent batch. 
For implementation, note that $\vw(\theta)$ are now intermediate states of the network. 
Therefore, direct gradients of them may not be accessible easily. 
what would be a good strategy for extracting and storing the gradients w.r.t. $\vw$? 
We need to test if we can use the grads of 
}

\end{document}